\newcommand{\E}{\mathbb{E}}
\newcommand{\KL}{\mathrm{KL}}
\newcommand{\G}{\mathcal{G}}
\newcommand{\x}{\bm{x}}
\newcommand{\h}{\bm{h}}
\newcommand{\z}{\bm{z}}      % main vector notation for configurations
\newcommand{\Normal}{\mathcal{N}}
\newcommand{\I}{\mathbf{I}}
\newcommand{\coloneqq}{\mathrel{\mathop:}=}
\theoremstyle{plain} % bold title, italic body (COLT/ICML style)
\newtheorem{theorem}{Theorem}
\newtheorem{lemma}{Lemma}
\newcommand{\cmark}{\ding{51}}
\theoremstyle{plain}
\definecolor{ourmethod}{HTML}{E6EAF2}
\icmltitlerunning{Elign: Equivariant Diffusion Model Alignment from 
  Foundational MLFFs}
\newcommand\DoToC{%

% Adding the title here
  \startcontents
  \printcontents{}{1}{\hrule\textbf{\begin{center}
       Appendix Table of Contents
  \end{center}}\vskip3pt\hrule\vskip5pt}
  \vskip3pt\hrule\vskip5pt
}
\begin{document}
\raggedbottom % avoid underfull \vbox warnings from page stretching

\twocolumn[
\icmltitle{Elign: Equivariant Diffusion Model Alignment from \texorpdfstring{\\}{ }%
  Foundational Machine Learning Force Fields}

  % It is OKAY to include author information, even for blind submissions: the
  % style file will automatically remove it for you unless you've provided
  % the [accepted] option to the icml2026 package.

  % List of affiliations: The first argument should be a (short) identifier you
  % will use later to specify author affiliations Academic affiliations
  % should list Department, University, City, Region, Country Industry
  % affiliations should list Company, City, Region, Country

  % You can specify symbols, otherwise they are numbered in order. Ideally, you
  % should not use this facility. Affiliations will be numbered in order of
  % appearance and this is the preferred way.
  \icmlsetsymbol{equal}{*}

  \begin{icmlauthorlist}
    \icmlauthor{Yunyang Li}{yyy}
     \icmlauthor{Lin Huang}{comp}
        \icmlauthor{Luojia Xia}{zzz}
        \icmlauthor{Wenhe Zhang}{yyy}
    \icmlauthor{Mark Gerstein}{yyy,zzz}
    % \icmlauthor{Firstname3 Lastname3}{comp}
    % \icmlauthor{Firstname4 Lastname4}{sch}
    % \icmlauthor{Firstname5 Lastname5}{yyy}
    % \icmlauthor{Firstname6 Lastname6}{sch,yyy,comp}
    % \icmlauthor{Firstname7 Lastname7}{comp}
    % %\icmlauthor{}{sch}
    % \icmlauthor{Firstname8 Lastname8}{sch}
    % \icmlauthor{Firstname8 Lastname8}{yyy,comp}
    % %\icmlauthor{}{sch}
    %\icmlauthor{}{sch}
  \end{icmlauthorlist}

  \icmlaffiliation{yyy}{Department of Computer Science, Yale University, New Haven, USA}
    \icmlaffiliation{zzz}{Program in Computational Biology and Biomedical Informatics, Yale University, New Haven, USA}
  \icmlaffiliation{comp}{IQuestLab}
  % \icmlaffiliation{sch}{School of ZZZ, Institute of WWW, Location, Country}

  \icmlcorrespondingauthor{Mark Gerstein}{pi@gersteinlab.org}
  % \icmlcorrespondingauthor{Firstname2 Lastname2}{first2.last2@www.uk}

  % You may provide any keywords that you find helpful for describing your
  % paper; these are used to populate the "keywords" metadata in the PDF but
  % will not be shown in the document
  \icmlkeywords{Machine Learning, ICML}

  \vskip 0.3in
]

% this must go after the closing bracket ] following \twocolumn[ ...

% This command actually creates the footnote in the first column listing the
% affiliations and the copyright notice. The command takes one argument, which
% is text to display at the start of the footnote. The \icmlEqualContribution
% command is standard text for equal contribution. Remove it (just {}) if you
% do not need this facility.

% Use ONE of the following lines. DO NOT remove the command.
% If you have no special notice, KEEP empty braces:
\printAffiliationsAndNotice{}  % no special notice (required even if empty)
% Or, if applicable, use the standard equal contribution text:
% \printAffiliationsAndNotice{\icmlEqualContribution}

\begin{abstract}
Generative models for 3D molecular conformations must respect Euclidean symmetries and concentrate probability mass on thermodynamically favorable, mechanically stable structures. 
However, E(3)-equivariant diffusion models often reproduce biases from semi-empirical training data rather than capturing the equilibrium distribution of a high-fidelity Hamiltonian. While physics-based guidance can correct this, it faces two computational bottlenecks: expensive quantum-chemical evaluations (e.g., DFT) and the need to repeat such queries at every sampling step. 
We present Elign, a post-training framework that amortizes both costs. First, we replace expensive DFT evaluations with a faster, pretrained foundational machine-learning force field (MLFF) to provide physical signals. Second, we eliminate repeated run-time queries by shifting physical steering to the training phase. To achieve the second amortization, we formulate reverse diffusion as a reinforcement learning problem and introduce Force--Energy Disentangled Group Relative Policy Optimization (FED-GRPO) to fine-tune the denoising policy. FED-GRPO includes a potential-based energy reward and a force-based stability reward, which are optimized and group-normalized independently. Experiments show that Elign generates conformations with lower gold-standard DFT energies and forces, while improving stability. Crucially, inference remains as fast as unguided sampling, since no energy evaluations are required during generation.
\end{abstract}

% ========================================================
%             Math Notation / Custom Commands
% ========================================================

% ========================================================
%                Theorem Environments (COLT style)
% ========================================================

% ---------- Theorem Environments (Compatible Version) ----------
\theoremstyle{plain} % bold title, italic body (COLT/ICML style)

\section{Introduction}
The generation of realistic three-dimensional molecular conformations is a central problem in computational chemistry, materials science, and drug discovery~\citep{xu2023geometric,hoogeboom2022equivariant}. For practical use, a generative model must satisfy two requirements. First, it must respect the symmetries of physics, most notably invariance to rigid body translations and rotations. Second, it must generate samples that correspond to low energy and physically stable configurations.

Score-based diffusion models with E(3)-equivariant architectures have shown notable performance in molecular generation~\citep{xu2023geometric,hoogeboom2022equivariant,cornet2024equivariant}. In the idealized case where the training data are drawn from a true thermodynamic equilibrium, the data distribution itself would follow the Boltzmann law~\citep{car1985unified}. Under this condition, maximizing likelihood naturally coincides with favoring mechanically plausible, low-energy configurations. In practice, however, this rarely holds. Standard datasets are typically constructed via semi-empirical relaxations~\citep{bannwarth2019gfn2} or heuristic enumeration~\citep{ramakrishnan2014quantum,isert2022qmugs}, approximating equilibrium only coarsely. Consequently, the resulting model replicates the biases of the dataset generation rather than the equilibrium distribution of a high-fidelity Hamiltonian~\citep{hohenberg1964inhomogeneous,kohn1965self}. This gap highlights a limitation of diffusion models: the likelihood objective does not, by design, inherently enforce physical stability.

A direct way to mitigate this is to introduce physical guidance or rewards during the generation process~\citep{wu2022diffusion}. In this setting, an oracle approximating the potential energy surface penalizes mechanically unstable configurations during sampling. While prior work~\cite{zhou2025guiding} has attempted to use first-principles oracles like density functional theory (DFT)~\citep{hohenberg1964inhomogeneous,kohn1965self}, the computational cost is often high. As a result, most approaches are forced to rely on tractable surrogates and compromises, such as applying rewards at terminal sampling stages~\citep{zhou2025guiding}, relying on post-processing~\citep{wu2022diffusion}, or utilizing semi-empirical potential methods~\citep{shenchemistry,zhou2025guiding}. Notably, many guidance-based approaches~\citep{shenchemistry} require \textit{run-time alignment}, meaning that energy or force evaluations must be performed during sampling, incurring additional computational overhead at inference.  Another line of work reframes reward-guided diffusion as stochastic optimal control, leading to algorithms such as Adjoint Matching~\cite{havens2025adjoint} and Adjoint Schr\"odinger Bridge Sampler~\cite{liu2025adjoint}. These methods rely on adjoint equations that assume reward differentiability with respect to the state. In molecular diffusion models with mixed discrete and continuous variables, this assumption could fail, requiring gradients to be approximated using zeroth-order or surrogate estimators, e.g., Simultaneous Perturbation Stochastic Approximation (SPSA)~\citep{spall1992spsa, shenchemistry}.

MLFFs offer a natural mechanism for injecting physical constraints into generative models~\citep{chmiela2017machine,schutt2017quantum}. By amortizing the cost of expensive first-principles calculations~\citep{car1985unified} into the training phase, inference reduces to efficient neural network evaluations of energies and forces~\citep{behler2021four,unke2021machine}. We see this as the \emph{first level of amortization} in our framework. Moreover, a suggestive connection exists between MLFFs and diffusion models: the denoising objective used to train diffusion models~\citep{hoogeboom2022equivariant,xu2023geometric} has also proven effective for MLFF pretraining~\citep{zaidi2022pre,feng2023may}. It is also worth noting that historically, MLFFs are typically trained on narrow, system-specific datasets~\citep{chmiela2017machine,chmiela2018towards} such as single molecules, small chemical families, or homogeneous materials. Thus, they do not define a unified potential across the vast chemical space spanned by modern diffusion models. Recent ``foundation" MLFFs go a long way toward addressing these limitations~\citep{amin2025towards,mannan2025evaluating,unke2021machine,wood2025family}: trained on large, chemically diverse quantum-mechanical datasets~\citep{chanussot2021open,tran2023open,eastman2023spice,unke2024biomolecular}, they approximate potential energy surfaces across wide molecular classes while retaining amortized efficiency.

% A suggestive connection exists between MLFFs and diffusion models: the denoising objective used to train diffusion models~\citep{hoogeboom2022equivariant,xu2023geometric} has also proven effective for MLFF pretraining~\citep{zaidi2022pre,feng2023may}. \citet{zaidi2022pre} showed that denoising perturbed molecular structures implicitly learns a gradient field pointing toward the data manifold---interpretable as a \textit{effective-force} under the \textit{data} distribution. While this score-as-force analogy is conceptual rather than physical (the learned field reflects dataset statistics, not conservative forces from a Hamiltonian), it motivates a reciprocal hypothesis: if score functions behave as effective forces, then pretrained MLFFs, which \textit{explicitly} model conservative, physically grounded forces, may provide meaningful inductive signals to enhance diffusion models.

MLFFs could enable physically grounded simulation pipelines, such as molecular dynamics or equilibrium sampling. They can also provide guidance that is more accurate than purely data-driven generators. However, those methods remain expensive when many diverse conformations must be generated, as it requires long trajectories and repeated force evaluations. This motivates a \emph{second level of amortization}: compiling physical constraints into the diffusion model itself, shifting computation from inference to a post-training stage. By \emph{post-training}, we mean an additional training stage applied after the likelihood-based pretraining. Together, MLFFs amortize quantum-mechanical calculations into reusable potentials, while post-training amortizes repetitive inference into the denoiser, eliminating run-time overhead. 
Based on these, we introduce Elign, a framework that fine-tunes E(3)-equivariant diffusion models using MLFF-derived rewards. Elign employs potential-based reward shaping derived from MLFF energies. Both energy and force signals are combined under a disentangled group-relative policy optimization scheme (FED-GRPO), analogous to MLFF training. Experiments on QM9 and GEOM-Drugs show that Elign generates low-energy, mechanically stable conformations, outperforming run-time-guided methods while matching the inference speed of unguided samplers.

% In this work, we investigate whether a pretrained, foundation-scale MLFF can serve as an effective prior for E(3)-equivariant diffusion models.  By \textit{prior}, we mean a pretrained MLFF model that supplies physical signals missing from likelihood training, providing a structural constraint rather than a Bayesian probabilistic assumption over parameters. To achieve this,  we formulate the reverse diffusion process as a sequential decision problem, utilizing MLFF-derived energies and forces as rewards. Unlike methods relying on sparse queries to expensive oracles~\citep{wu2022diffusion}, our approach leverages the efficiency of universal MLFFs~\citep{wood2025family,batatia2023foundation} to provide continuous feedback throughout the sampling trajectory. Importantly, alignment is performed at \textit{compile time} via RL fine-tuning: physical constraints are absorbed into the diffusion parameters, eliminating the need for energy or force evaluations at inference. 
% % This constitutes a second amortization: after first-principles physics is encoded in the MLFF, its energy and force evaluations are used only during training and are not required at sampling time. 
% Our experiments indicate the resulting sampler produces conformations with lower MLFF-predicted energies and improved mechanical stability.

\section{Preliminaries}  
\label{sec:prelim}  
\paragraph{Denoising diffusion models.}  

Diffusion models define a generative distribution $p_\theta(\z_0)$ over data $p_{\text{data}}(\z_0)$ on a space $\mathcal{Z}$ by learning to reverse an iterative forward noising process. The forward process is defined by the SDE $d\z_t = b_t(\z_t)\, dt + \sigma_t\, d\mathbf{B}_t$ with initial condition $\z_0 \sim p_{\text{data}}$. Let $p_t = \text{Law}(\z_t)$. For the general forward diffusion $d\z_t = -\frac{1}{2}\beta_t\z_t\, dt + \sqrt{\beta_t}\, d\mathbf{B}_t$, the transition distribution is  
$p(\z_t|\z_0)=\Normal(\z_t; \alpha_t \z_0, \bar\sigma_t^2 \I)$  
with $\alpha_t = \exp(-\frac{1}{2}\int_{0}^{t}\beta_sds)$ and $\bar\sigma_t^2 = 1 - \exp(-\int_{0}^{t}\beta_sds)$. With a constant noise schedule $\beta_t = 2$, we recover the Ornstein–Uhlenbeck forward process $d\z_t = -\z_t\, dt + \sqrt{2}\, d\mathbf{B}_t$ with $\alpha_t = e^{-t}$ and $\bar\sigma_t^2 = 1 - e^{-2t}$. This gives a signal-to-noise ratio $\mathrm{SNR}(t) \coloneqq \alpha_t^2/\bar\sigma_t^2$ that decreases monotonically in $t$, ensuring that as $t \to T$, $p_T \approx \mathcal{N}(0,\I)$.
To define the reverse process, let $\z_t^{\leftarrow} = \z_{T-t}$ denote the time-reversed trajectory with law $p_t^{\leftarrow} = p_{T-t}$. By Nelson's theorem~\citep{nelson1967brownian} with mild regularity conditions, the reverse-time process satisfies the drift relation $b_t(\z) + b_{T-t}^{\leftarrow}(\z) = a_t \nabla \log p_t(\z)$ where $a_t = \sigma_t\sigma_t^\top$, yielding the reverse SDE  
$d\z_t^{\leftarrow} = \left(-b_{T-t}(\z_t^{\leftarrow}) + a_{T-t} \nabla \log p_{T-t}(\z_t^{\leftarrow})\right) dt + \sigma_{T-t}\, d\mathbf{B}_t$.  
In practice, the score function is approximated by a neural network $s_\theta:\mathcal{Z} \times [0,T] \to \mathcal{Z}$ trained via denoising score matching by minimizing $\E\left[\|s_\theta(\z_t,t) - \nabla_{\z_t} \log p(\z_t|\z_0)\|^2\right]$.

\paragraph{Equivariant diffusion models.}  
Let $\G$ be a group acting on a space $\mathcal{Z}$. A function $f:\mathcal{Z} \to \mathcal{Z}$ is $\G$-equivariant if $f(g \cdot \z) = g \cdot f(\z)$ for all $g \in \G, \z \in \mathcal{Z}$, and a distribution $p$ on $\mathcal{Z}$ is $\G$-invariant if its density satisfies $p(g \cdot \z) = p(\z)$. To construct a diffusion model that generates samples from a $\G$-invariant law, both the forward and reverse SDEs must respect this symmetry. For the forward process, the Fokker–Planck equation $\partial_t p_t = \frac{1}{2}\langle \sigma_t \sigma_t^{\top}, \nabla^2 p_t \rangle - \nabla \cdot (b_t p_t)$ preserves invariance when $b_t$ is equivariant and $\sigma_t\sigma_t^{\top}$ commutes with the group action. 
By defining a $\G$-equivariant forward process, the true reverse-time drift is automatically $\G$-equivariant, since it is composed of an equivariant drift $b_t$ and an equivariant score function $\nabla \log p_t$ (for a $\G$-invariant distribution $p_t$, $\nabla \log p_t(g \cdot \z) = g \cdot \nabla \log p_t(\z)$).
In practice, the unknown true score is estimated by a parameterized model $s_\theta(\z_t, t)$. To preserve this symmetry, the score model must be parameterized as a $\G$-equivariant neural network. The reverse process must also be initiated by drawing samples from a $\G$-invariant terminal distribution $p_T$.

\paragraph{$N$-body data and subspace diffusion.}  
An $N$-body system is defined by state $\z = [\mathbf{x}, \mathbf{h}] \in \mathbb{R}^{N \times 3} \times \mathbb{R}^{N \times d_h}$, comprising positions $\mathbf{x} \in \mathcal{X}$ and invariant features $\mathbf{h} \in \mathcal{H}$ (e.g., atom types). The physical distribution $p(\z)$ is invariant under the Euclidean group $\mathrm{E}(3)$ acting on $\mathcal{X}$.
Translation invariance implies that $p(\z)$ is not normalizable on the full space $\mathbb{R}^{3N}$. We therefore restrict the diffusion to the linear subspace of zero center-of-mass (CoM) configurations~\citep{hoogeboom2022equivariant}, $\mathcal{M} = \{ \mathbf{x} \in \mathbb{R}^{N \times 3} \mid \sum_i \mathbf{x}_i = \mathbf{0} \}$.  
We define the projection operator $\mathbf{P}_{\mathrm{CoM}}$ as a block-diagonal matrix: on positions it is the $3N \times 3N$ centering projector, while on features it acts as identity. Explicitly, $\mathbf{P}_{\mathrm{CoM}} = \mathrm{diag}(\mathbf{P}_{\mathcal{M}}, \mathbf{I}_{N d_h})$ where $\mathbf{P}_{\mathcal{M}}$ projects onto $\mathcal{M}$. The forward SDE is modified to diffuse only within this subspace:  
\(
    d\z_t = -\frac{1}{2}\beta_t \z_t\, dt + \sqrt{\beta_t}\, \mathbf{P}_{\mathrm{CoM}}\, d\mathbf{B}_t.  
\)
Because $\mathbf{P}_{\mathcal{M}}$ is idempotent (that is, $\mathbf{P}_{\mathcal{M}} \circ \mathbf{P}_{\mathcal{M}} = \mathbf{P}_{\mathcal{M}}$) and commutes with the $\mathrm{E}(3)$ action restricted to rotations and reflections, the marginal distributions $p_t$ remain supported on $\mathcal{M}$ and retain $\mathrm{E}(3)$-invariance. The score network $s_\theta$ is parameterized to be $\mathrm{E}(3)$-equivariant and to output both coordinate and feature components, $s_\theta(\z_t,t)=[s_\theta^{(\mathbf{x})}(\z_t,t),s_\theta^{(\mathbf{h})}(\z_t,t)]\in\mathbb{R}^{N\times 3}\times\mathbb{R}^{N\times d_h}$. We enforce the zero-CoM constraint by projecting only the coordinate component (equivalently, applying $\mathbf{P}_{\mathcal{M}}$ to coordinate updates) at each denoising step.

\paragraph{Machine-learning force fields.}  
Machine-learning force fields (MLFFs) serve as efficient surrogates for quantum-mechanical (QM) calculations, enabling dynamics simulations at scales inaccessible to \textit{ab initio} methods. We assume access to a \emph{quantum chemistry oracle} which, for any given molecular configuration $\z$, computes a reference energy $E^{\text{ref}}$ and reference atomic forces $\mathbf{F}^{\text{ref}}$. As direct queries to such oracles are computationally expensive, often scaling cubically with the number of atoms, MLFFs amortize this cost by training a neural network to approximate the potential energy function $E_\phi(\z)$. Atomic forces are typically obtained via automatic differentiation as the negative gradient of this potential, $\mathbf{F}_\phi^{(i)} = -\nabla_{\mathbf{x}^{(i)}} E_\phi(\z)$, though they may also be parameterized directly via a dedicated force regression head. 
In practice (e.g.,~\cite{wood2025family}), one can first train a direct force-regression head and then fine-tune via automatic differentiation.
During training, the network parameters $\phi$ are optimized by minimizing a combined energy-and-force matching loss over a dataset of reference calculations:
\begin{small}  
\begin{equation}
\mathcal{L} =
 \lambda_E \left(E_\phi(\z) - E^{\text{ref}}\right)^2 + \lambda_F \sum_{i=1}^N \left\|\mathbf{F}_\phi^{(i)}(\z) - \mathbf{F}^{\text{ref},(i)}(\z)\right\|^2,
 \label{eq:mlff-loss}
\end{equation}
\end{small}  
where $\lambda_E$ and $\lambda_F$ are weighting coefficients. Physical consistency imposes strict symmetry constraints: $E_\phi$ must be $\mathrm{E}(3)$-invariant (unaffected by global rotation, translation, and inversion), while the derived forces must be $\mathrm{E}(3)$-equivariant. These constraints are enforced via specialized invariant or equivariant network architectures. Recently, foundational MLFFs have scaled this paradigm to massive heterogeneous datasets, learning universal potentials that are \emph{transferable} across diverse chemical systems.

\section{Diffusion post-training with a force-field alignment model}  
\begin{figure}  
        \centering  
        \vspace{-0.4em}
\includegraphics[width=0.47\textwidth]{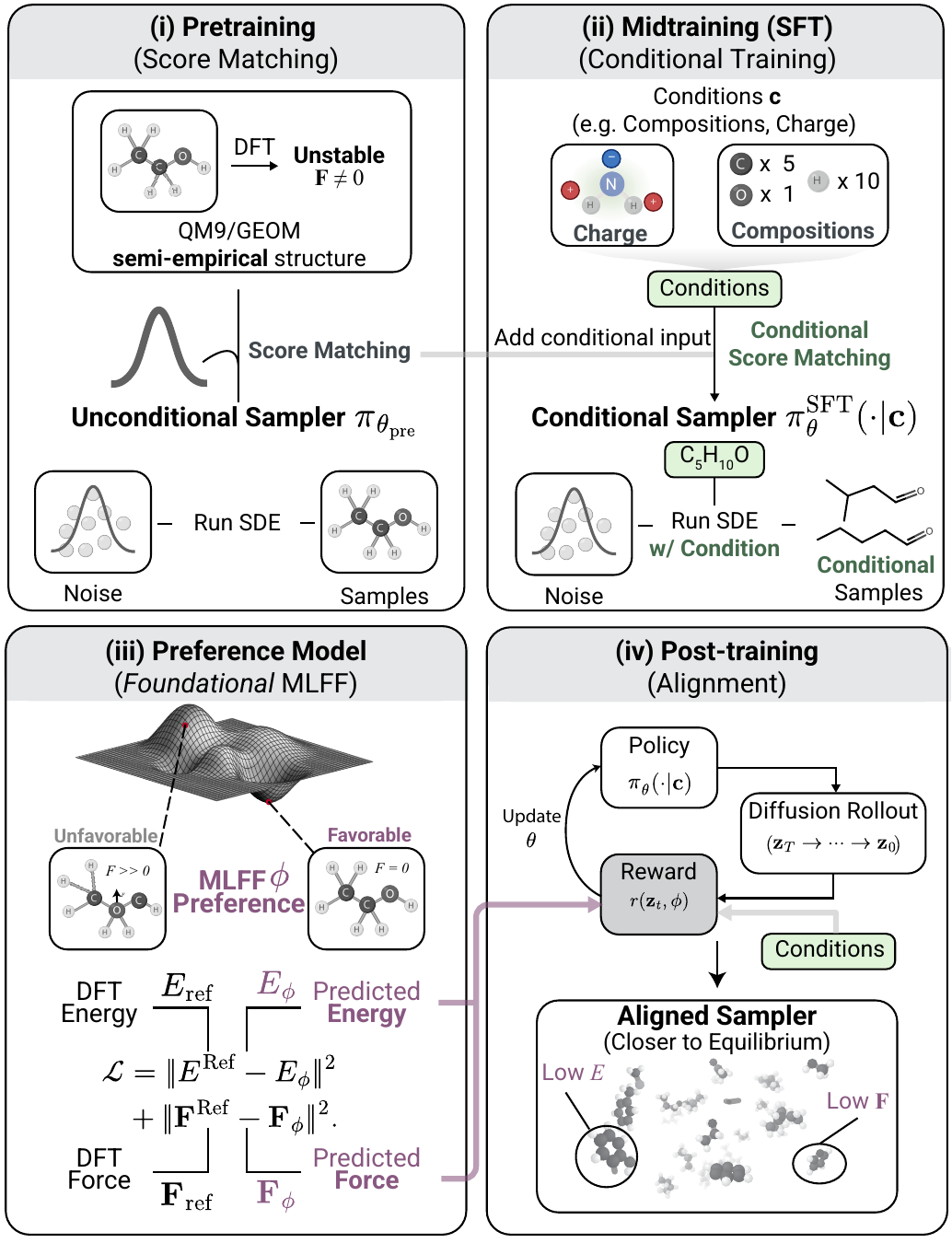}
    \vspace{-0.5em}
    \caption{\textbf{Staged pipeline for equilibrium molecular generation.} (i) \textbf{Pretraining:} score-matching trains an E(3)-equivariant diffusion model $\pi_\theta^{\text{pre}}$ on approximate structures. (ii) \textbf{Optional SFT:} conditional fine-tuning improves adherence to discrete specifications. (iii) \textbf{Preference model:} a foundation MLFF $\phi$ provides energy/force signals. (iv) \textbf{Post-training:} RL fine-tunes the sampler to improve stability without run-time oracle calls.}  
    \label{fig:pipeline}  
    \vspace{-1.5em}  
\end{figure}  

\begin{figure*}  
    \centering  

    \includegraphics[width=0.91\linewidth]{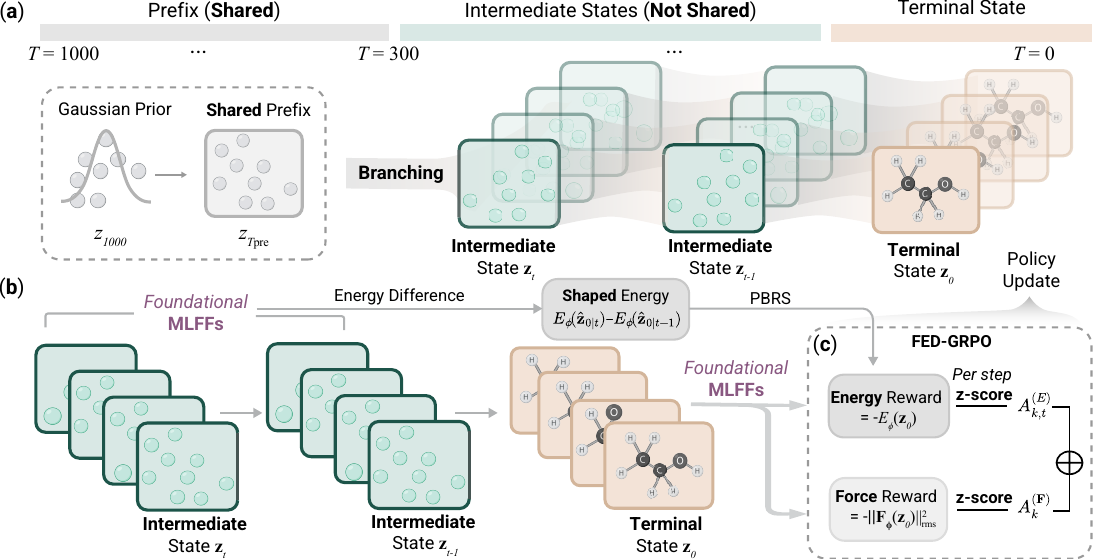}  
    \vspace{-0.5em}  
\caption{Overview of Elign. \textbf{(a)} Rollout branching with shared prefix: starting from a CoM-free Gaussian prior at $t=T$, an EGNN policy denoises to $t=0$. At $t=T_{\mathrm{prefix}}$, we cache a shared prefix state and branch into $K$ rollouts with independent noise. Each trajectory is propagated to its terminal state $\z_0$ and scored by a foundational MLFF. \textbf{(b)} Energy-based reward shaping: intermediate predicted clean geometries $\hat{\z}_{0|t}$ are evaluated by the MLFF, and local energy differences $E_\phi(\hat{\z}_{0|t}) - E_\phi(\hat{\z}_{0|t-1})$ provide dense shaping signals that bias the policy toward lower-energy conformations. \textbf{(c)} FED-GRPO: terminal energy and RMS force rewards are z-score normalized separately per timestep, then combined to compute the final advantage for policy updates.}
    \label{fig:grpo}  
    \vspace{-1.7em}  
\end{figure*}
\paragraph{Setup.} As shown in Figure~\ref{fig:pipeline}, we organize equilibrium molecular generation into a staged pipeline analogous to LLM training: base model training, optional supervised conditioning, preference modeling, and post-training alignment. (i) \textbf{Pretraining (score matching):} We train an equivariant diffusion model $\pi_{\theta_{\mathrm{pre}}}$ via score matching on large collections of approximate equilibrium structures (e.g., QM9-style semi-empirical pipelines). This stage learns a broad generative distribution but does not enforce physical stability. (ii) \textbf{Optional SFT (conditional training):} When explicit constraints are required, one could optionally fine-tune the model to condition on variables such as by training a conditional diffusion model $\pi_{\theta_{\mathrm{sft}}}(\cdot \mid \mathbf{c})$ on paired data $(\mathbf{c}, \z)$ using the same denoising objective. This step corresponds to instruction SFT in LLMs and improves constraint adherence without introducing preference optimization\footnote{Due to the page limit, we include results on conditional generation in Appendix~\ref{sec:conditional-gen}.}.
(iii) \textbf{Preference model (foundation MLFF):} We use a pretrained foundation MLFF $\phi$ as a preference model, providing energies and forces that quantify thermodynamic stability and mechanical equilibrium. (iv) \textbf{Post-training (alignment):} We treat reverse diffusion as a trajectory-generating process and apply reinforcement learning to align $\pi_\theta$ with the MLFF-defined preferences, while limiting deviation from the pretrained or SFT-initialized model via trust-region regularization.

\textbf{Diffusion as an MDP.} We formulate the iterative denoising process as a finite-horizon Markov Decision Process (MDP). We index reverse time with $t$ decreasing from $T$ to 0. We initialize our method with a pretrained equivariant diffusion model, $\pi_{\theta_{\text{pre}}}$, which serves as the reference base policy.
The state is defined by $S_t =  (\mathbf{z}_t, t)$, comprising the latent molecular geometry $\mathbf{z}_t$ and the discrete reverse-diffusion time index $t$.  The \textit{policy} $\pi_\theta(\mathbf{z}_{t-1} \mid \mathbf{z}_t, t)$ executes a \textit{one-step discretization} of the reverse SDE update (e.g., Euler--Maruyama). For simplicity, we assume first-order solvers so the unaugmented state $S_t$ is Markov. This induces a Gaussian policy parameterized by the score network:
\(  
\pi_\theta(\z_{t-1} \mid \z_t, t) = \mathcal{N}\bigl(\z_{t-1};\, \mu_\theta(\z_t, t),\, \Sigma_t\bigr),  
\)  
where $\mu_\theta$ is the learned reverse mean and $\Sigma_t$ is given. The \textit{action} is defined as a realization sampled from the Gaussian policy.  It is important to distinguish the per-step policy, denoted $\pi(\mathbf{z}_{t-1} \mid \mathbf{z}_t, t)$, from the resulting distribution $\rho$, which arises from the sequential chaining of these policy steps.
All stochasticity in the trajectory arises from the Gaussian perturbation inside the policy itself. There are two apparent sources of randomness: (i) the noise term of the discretized reverse-time SDE and (ii) sampling from a stochastic policy. In our case, these two sources \textit{coincide} because the policy itself is the SDE kernel.
Here, we offer an alternative interpretation of this MDP through the lens of stochastic optimal control (of which maximum-entropy RL is a special case). In this view, the goal of RL is to learn a policy that acts as a ``control knob,” modulating the drift term $\mu_\theta(\cdot)$ of the diffusion process to maximize the expected reward. This is also analogous to a \textit{learnable guidance function} in diffusion guidance.

\subsection{Alignment Objective} 

The objective of RL post-training is to ensure the final generated structures are physically valid. Validity requires two properties: thermodynamic stability (low potential energy) and mechanical equilibrium (small residual forces). To design the reward, we draw inspiration from training MLFFs, which utilize  supervision from both energy and force labels to learn the potential energy surface (Eq.~\ref{eq:mlff-loss}). Because force correlates with the gradient of energy, these objectives are coupled but distinct; a structure can have low energy but high unstable forces if it sits on a steep slope of the potential energy surface. We therefore define the \textit{terminal alignment reward}, assigned only at the final step when $t=0$. Specifically:
\begin{equation}  
    r^{(E)}_{0}(\z_0) \coloneqq - E_\phi(\z_0);  \; r^{(\mathbf{F})}_{0}(\z_0) \coloneqq -\|\mathbf{F}_\phi(\z_0)\|_{\mathrm{rms}}^2.
    \label{eq:terminal_reward}
\end{equation}  
Here, $E_\phi$ and $\mathbf{F}_\phi$ are the energy and force predictions from the pretrained MLFF, and $\|\mathbf{F}\|_{\mathrm{rms}}$ denotes the per-atom RMS force magnitude (i.e., $\|\mathbf{F}\|_{\mathrm{rms}} \coloneqq \frac{1}{\sqrt{N}}\|\mathbf{F}\|_F$). 
\paragraph{Energy-based potential shaping.}  
Optimizing solely for a terminal alignment reward yields a sparse learning signal over long reverse diffusion horizons. To facilitate credit assignment while preserving the optimal policy of the original terminal-reward MDP, we add an intermediate signal using \emph{potential-based reward shaping} (PBRS). Traditionally, PBRS relies on a potential function $\Psi(S)$ that estimates the ``closeness'' of a current state to the goal, much like an admissible heuristic in A* search ~\citep{hart1968astar}.  In our setting, this role is naturally played by physics: because our objective is thermodynamic stability, the physical potential energy $E_\phi$ directly quantifies the distance from equilibrium.

The MLFF energy $E_\phi(\cdot)$ is physically meaningful only on clean atomic coordinates, not on the noisy intermediates $\z_t$. At each reverse step $t$ we therefore reconstruct a \emph{predicted clean geometry} $\z_{0|t}$ from $\z_t$ using the diffusion posterior mean estimate: $\hat\z_{0|t} = \frac{1}{\alpha_t}(\z_t + \bar\sigma_t^2s_\theta(\z_t, t))$. We define a shaping potential on this reconstructed estimate:  
\(
    \Psi(S_t) \;\coloneqq\; -\,\,E_\phi\!\left(\hat\z_{0|t}\right),   S_t=(\z_t,t).
    \label{eq:shaping_potential}
\)
The intermediate shaping reward for a transition $S_t \rightarrow S_{t-1}$ is then  
\begin{equation}  
    r_t^{\mathrm{shape}}  
    \;\coloneqq\;  
    \gamma\,\Psi(S_{t-1}) - \Psi(S_t),  
    \qquad t = T_{\mathrm{prefix}},\dots,1.  
    \label{eq:pbrs_energy}  
\end{equation}  
This is the canonical PBRS form; for a fixed shaping potential $\Psi$, it preserves the set of optimal policies under the discounted return~\citep{ng1999policy}. In particular, if we define the (discounted) shaped return-to-go from state $S_t$ as  
\(  
G_t^{\mathrm{shape}} \;\coloneqq\; \sum_{u=1}^{t} \gamma^{t-u}\, r_u^{\mathrm{shape}},  
\)  
then the shaping contribution telescopes:  
\begin{small}
    \begin{equation}  
G_t^{\mathrm{shape}}  
=  
\sum_{u=1}^{t} \gamma^{t-u}\big(\gamma\Psi(S_{u-1})-\Psi(S_u)\big)  
=  
\gamma^{t}\Psi(S_0) - \Psi(S_t).  
\label{eq:shape_telescope}  
\end{equation} 
\end{small}
Intuitively, the return-to-go is the discounted difference between terminal and current state potentials. 
It is worth noting that in our implementation, $\Psi(S_t)$ is evaluated on a plug-in estimate $\hat{z}_{0|t}$ produced by the current denoiser, so PBRS should be viewed as a practical heuristic rather than a strict policy-invariance guarantee.
We also considered a force-based potential, e.g., $\Psi_F(S_t)=-\|\mathbf{F}_\phi(\hat{\z}_{0|t})\|_{\mathrm{rms}}^2$, and apply PBRS analogously.  We found it unstable in practice, possibly because forces (first-order derivatives) are more sensitive than energies (zeroth-order) to small geometric errors in $\hat{\z}_{0|t}$.
\subsection{Theoretical View of the Alignment Objective} 
\label{sec:alignment_theory} 
The rewards above specify what properties we desire in the final sample. We now adopt a distributional perspective to characterize the \emph{terminal law} that post-training encourages. Because the diffusion policy is parameterized by an E(3)-equivariant architecture, post-training preserves invariance of the terminal distribution. For analytical clarity, we focus on the energy objective and ignore PBRS.

Let $\mathcal{Z}$ denote the (CoM-free) configuration space of clean molecular structures, and identify the terminal denoised sample $\z_0\in\mathcal{Z}$. Let $\rho_{\theta_{\mathrm{pre}}}\in\mathcal{P}(\mathcal{Z})$ denote the terminal distribution induced by the pretrained diffusion model $\pi_{\theta_{\text{pre}}}$, i.e., $\z_0\sim \rho_{\theta_{\mathrm{pre}}}$ when sampling from $\pi_{\theta_{\text{pre}}}$. Any fine-tuned policy $\pi_\theta$ induces its own terminal law $\rho_\theta$ over $\mathcal{Z}$.

% To isolate the effect of energy-based alignment, we consider the following objective that optimizes \emph{directly over terminal distributions} $\rho$ while penalizing deviation from the pretrained prior $\mu$:  
% \begin{equation}  
%   \label{eq:ideal_objective_dist}  
%   \mathcal{J}(\rho)  
%   :=  
%   \E_{\z_0\sim\rho}\!\left[-w_E\,E_\phi(\z_0)\right]  
%   - \tau\,\KL(\rho\Vert \rho_{\theta_{\mathrm{pre}}}),  
% \end{equation}  
% where $\tau>0$ controls the strength of the trust region / KL regularization (in practice, this role is played by the GRPO trust region, optionally together with an explicit KL penalty).

\begin{theorem}[Energy-aligned terminal distribution]
\label{thm:gibbs_tilt}
Assume the policy class is rich enough that any $\rho\ll \rho_{\theta_{\mathrm{pre}}}$ can be realized as the terminal law of some admissible reverse diffusion policy, and for a trust-region regularized reward maximization objective:
\(
  \mathcal{J}(\rho)  
  :=  
  \E_{\z_0\sim\rho}\!\left[-\,E_\phi(\z_0)\right]  
  - w_{\mathrm{KL}}\,\KL(\rho\Vert \rho_{\theta_{\mathrm{pre}}}),  
\)  
 the maximum is attained. Then the maximizer is unique and equals
\begin{equation}
  \label{eq:rho_star_energy}
  \rho^\star(\z)
  =
  \frac{1}{Z_{\phi}}\,
  \rho_{\theta_{\mathrm{pre}}}(\z)\exp\!\Bigl(-\beta_{\mathrm{eff}}\,E_\phi(\z)\Bigr),
  \beta_{\mathrm{eff}} := \frac{1}{ w_{\mathrm{KL}}},
\end{equation}
with normalizer $Z_{\phi}=\int_{\mathcal{Z}}\rho_{\theta_{\mathrm{pre}}}(\mathbf{u})\exp(-\beta_{\mathrm{eff}}E_\phi(\mathbf{u}))\,d\mathbf{u}$.
\end{theorem}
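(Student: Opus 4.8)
The plan is to recognize the trust-region objective as a single Kullback--Leibler divergence, up to an additive constant, against the claimed optimum --- i.e.\ to apply the Gibbs variational principle (``completion of the square''), which is cleaner than a Lagrange-multiplier / Euler--Lagrange stationarity argument and simultaneously delivers existence, uniqueness, and the closed form. Fix any competitor $\rho\ll\rho_{\theta_{\mathrm{pre}}}$ with $\KL(\rho\Vert\rho_{\theta_{\mathrm{pre}}})<\infty$ and $\E_\rho[E_\phi]$ finite, and set $\beta_{\mathrm{eff}}=1/w_{\mathrm{KL}}$. Merging the two logarithms in $\KL(\rho\Vert\rho_{\theta_{\mathrm{pre}}})$ and $\beta_{\mathrm{eff}}\E_\rho[E_\phi]=\int\rho\log e^{\beta_{\mathrm{eff}}E_\phi}$ gives the identity $\KL(\rho\Vert\rho_{\theta_{\mathrm{pre}}})+\beta_{\mathrm{eff}}\E_\rho[E_\phi]=\int\rho(\z)\log\frac{\rho(\z)}{\rho_{\theta_{\mathrm{pre}}}(\z)e^{-\beta_{\mathrm{eff}}E_\phi(\z)}}\,d\z=\KL(\rho\Vert\rho^\star)-\log Z_\phi$, with $\rho^\star$ and $Z_\phi$ exactly as in the statement. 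Multiplying through by $-w_{\mathrm{KL}}<0$ yields $\mathcal{J}(\rho)=w_{\mathrm{KL}}\log Z_\phi - w_{\mathrm{KL}}\,\KL(\rho\Vert\rho^\star)$.

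\textbf{Conclusion.} Since $w_{\mathrm{KL}}>0$ and $\KL(\cdot\Vert\rho^\star)\ge 0$ with equality iff the arguments coincide, we get $\mathcal{J}(\rho)\le w_{\mathrm{KL}}\log Z_\phi$ for all admissible $\rho$, with equality \emph{iff} $\rho=\rho^\star$. The bound is attained because $\rho^\star$ is itself admissible: its density is $\rho_{\theta_{\mathrm{pre}}}$ times the strictly positive factor $Z_\phi^{-1}e^{-\beta_{\mathrm{eff}}E_\phi}$, hence $\rho^\star\ll\rho_{\theta_{\mathrm{pre}}}$, so by the richness assumption it is realizable as the terminal law of some admissible reverse-diffusion policy. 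This establishes both uniqueness and the explicit Gibbs-tilt form in one stroke. Competitors with $\KL(\rho\Vert\rho_{\theta_{\mathrm{pre}}})=+\infty$ or $\E_\rho[E_\phi]=+\infty$ give $\mathcal{J}=-\infty$ and are discarded; the case $\E_\rho[E_\phi]=-\infty$ is excluded automatically once $Z_\phi<\infty$, via the same identity (the right-hand side is then bounded below).

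\textbf{Remaining technical points and the main obstacle.} The argument hinges on one genuinely non-cosmetic hypothesis, namely \emph{integrability}: $Z_\phi=\E_{\rho_{\theta_{\mathrm{pre}}}}\!\big[e^{-\beta_{\mathrm{eff}}E_\phi}\big]\in(0,\infty)$. This holds whenever $E_\phi$ is bounded below on the (effectively compact) conformer support, which is the case for standard MLFF architectures; I regard surfacing and justifying this condition as the principal obstacle, since without it $\rho^\star$ is ill-defined and the supremum of $\mathcal{J}$ may be $+\infty$. The measure-theoretic bookkeeping is benign: because $e^{-\beta_{\mathrm{eff}}E_\phi}>0$ pointwise, the log-merging step involves no null-set subtleties, and $\rho\ll\rho_{\theta_{\mathrm{pre}}}$ (assumed) is inherited by $\rho^\star$, so all Radon--Nikodym derivatives used are well-defined. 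Finally, although not needed for optimality, it is worth recording as a corollary that $\rho^\star$ is $\mathrm{E}(3)$-invariant: $\rho_{\theta_{\mathrm{pre}}}$ is invariant by construction of the equivariant sampler and $E_\phi$ is $\mathrm{E}(3)$-invariant, so their (normalized) product is invariant --- consistent with the equivariant parameterization retained throughout post-training.
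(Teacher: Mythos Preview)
Your proof is correct and takes a genuinely different route from the paper. The paper proceeds via a Lagrange-multiplier / Euler--Lagrange argument: it introduces a multiplier for the normalization constraint, computes first-order stationarity of the perturbed Lagrangian, solves the resulting pointwise equation for the density $f^\star=d\rho^\star/d\mu$, and then invokes strict concavity of $-\int f\log f\,d\mu$ separately to obtain uniqueness. You instead apply the Gibbs/Donsker--Varadhan variational identity directly, rewriting $\mathcal{J}(\rho)=w_{\mathrm{KL}}\log Z_\phi - w_{\mathrm{KL}}\KL(\rho\Vert\rho^\star)$, from which the maximizer, its uniqueness, and the optimal value $w_{\mathrm{KL}}\log Z_\phi$ all fall out simultaneously without any stationarity computation. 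Your approach is shorter and more self-contained, and it has the additional virtue of making the integrability condition $Z_\phi\in(0,\infty)$ explicit as the load-bearing hypothesis (the paper hides this inside ``the maximum is attained''). The paper's approach, by contrast, is perhaps more familiar to readers used to constrained-optimization derivations and generalizes mechanically to other reward functionals. Your closing remark on $\mathrm{E}(3)$-invariance of $\rho^\star$ is a nice bonus the paper does not record at this point.
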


\noindent\emph{Proof sketch.} This is the standard variational form of KL-regularized optimization: enforce normalization with a Lagrange multiplier, take first-order optimality conditions in $\rho$, and normalize. Strict concavity in $\rho$ (from the negative entropy term) gives uniqueness.

\noindent This result admits an energy decomposition view: defining a ``prior energy'' $E_{\mathrm{prior}}(\z) := -\beta_{\mathrm{eff}}^{-1}\log \rho_{\theta_{\mathrm{pre}}}(\z)$, the optimal density satisfies
\[
\rho^\star(\z) \propto \exp\!\Bigl(-\beta_{\mathrm{eff}}\bigl[E_\phi(\z)+E_{\mathrm{prior}}(\z)\bigr]\Bigr).
\]
The terminal distribution is therefore determined by two components: the physical energy $E_\phi(\z)$ from the MLFF and an implicit prior induced by the pretrained model. Note that we do not compute $E_{\mathrm{prior}}$ explicitly; this is a conceptual decomposition, and $\rho_{\theta_{\mathrm{pre}}}$ need not admit a closed-form density. Nevertheless, this perspective clarifies that alignment sharpens thermodynamic plausibility while remaining confined to the learned data manifold, with the pretrained model acting as both a regularizer and a support constraint (via the absolute-continuity condition). This highlights the importance of high-quality pretraining. Moreover, post-training can be viewed as amortized sampling: instead of applying energy guidance at inference time via repeated $E_\phi$ evaluations, RL \emph{compiles} the corresponding Gibbs reweighting into the reverse diffusion policy. Next, we relate MLFF approximation error to distributional distance.

\paragraph{MLFF approximation and distribution error.}
Let $E_\star:\mathcal{Z}\!\to\!\mathbb{R}$ be a target potential energy and $E_\phi$ its MLFF approximation.  
Fix a reference terminal law $\rho_{\theta_{\mathrm{pre}}}$ and define
\(
    \label{eq:tilted_dists_compact}
\rho_\star^\star(\z)\;\propto\;\rho_{\theta_{\mathrm{pre}}}(\z)\,
e^{-\beta_{\mathrm{eff}}E_\star(\z)},\)
\(
\rho_\phi^\star(\z)\;\propto\;\rho_{\theta_{\mathrm{pre}}}(\z)\,
e^{-\beta_{\mathrm{eff}}E_\phi(\z)}
\). We characterize the distribution error using the total variation distance:
\begin{theorem}[Energy error implies distribution error]
\label{thm:energy_to_dist_twocol}
Assume the MLFF uniformly approximates the target energy,
$\sup_{\z\in\mathcal{Z}}|E_\phi(\z)-E_\star(\z)|\le\delta$ for some $\delta\ge0$.
Then the aligned terminal laws satisfy
\begin{align}
\label{eq:tv_bound_twocol}
\|\rho_\phi^\star-\rho_\star^\star\|_{\mathrm{TV}}
&\le \tanh(\beta_{\mathrm{eff}}\delta).
\end{align}
\end{theorem}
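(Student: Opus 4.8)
The plan is to reduce everything to the pointwise range of the Radon--Nikodym derivative $R := d\rho_\phi^\star/d\rho_\star^\star$ and then close with a one-variable extremal inequality. First I would write both tilted laws against the common base $\rho_{\theta_{\mathrm{pre}}}$: since $\rho_\star^\star(\z)=\rho_{\theta_{\mathrm{pre}}}(\z)e^{-\beta_{\mathrm{eff}}E_\star(\z)}/Z_\star$ and $\rho_\phi^\star(\z)=\rho_{\theta_{\mathrm{pre}}}(\z)e^{-\beta_{\mathrm{eff}}E_\phi(\z)}/Z_\phi$ with normalizers $Z_\star,Z_\phi$, the shared factor $\rho_{\theta_{\mathrm{pre}}}$ cancels and $R(\z)=\tfrac{Z_\star}{Z_\phi}\,e^{-\beta_{\mathrm{eff}}\Delta(\z)}$ where $\Delta:=E_\phi-E_\star$. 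The two measures are mutually absolutely continuous wherever $\rho_{\theta_{\mathrm{pre}}}>0$, so this is well defined, and $\beta_{\mathrm{eff}}=1/w_{\mathrm{KL}}>0$.

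Next I would bound $R$ pointwise. Writing $Z_\phi/Z_\star=\E_{\rho_\star^\star}[e^{-\beta_{\mathrm{eff}}\Delta}]$ and using $|\Delta|\le\delta$ gives $Z_\phi/Z_\star\in[e^{-\beta_{\mathrm{eff}}\delta},e^{\beta_{\mathrm{eff}}\delta}]$, hence $Z_\star/Z_\phi$ lies in the same interval; combined with $e^{-\beta_{\mathrm{eff}}\Delta(\z)}\in[e^{-\beta_{\mathrm{eff}}\delta},e^{\beta_{\mathrm{eff}}\delta}]$ this yields $R(\z)\in[a,b]$ with $a=e^{-2\beta_{\mathrm{eff}}\delta}$, $b=e^{2\beta_{\mathrm{eff}}\delta}$, where $ab=1$ and $a\le 1\le b$. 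Then I would express the total variation via $R$: for $P\ll Q$ one has $\|P-Q\|_{\mathrm{TV}}=\tfrac12\E_Q|R-1|$, and since $\E_{\rho_\star^\star}[R]=1$ the positive and negative parts of $R-1$ carry equal mass, so $\|\rho_\phi^\star-\rho_\star^\star\|_{\mathrm{TV}}=\E_{\rho_\star^\star}\!\big[(1-R)_+\big]$.

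Finally I would apply a chord (Jensen-type) bound: the map $r\mapsto(1-r)_+$ is convex, so on $[a,b]$ it lies below the secant line through $(a,1-a)$ and $(b,0)$; integrating this pointwise inequality against any law on $[a,b]$ with mean $1$ gives $\E_{\rho_\star^\star}[(1-R)_+]\le \tfrac{b-1}{b-a}(1-a)$. Using $ab=1$ this simplifies to $\tfrac{a+b-2}{b-a}=\tfrac{\cosh(2\beta_{\mathrm{eff}}\delta)-1}{\sinh(2\beta_{\mathrm{eff}}\delta)}$, and the half-angle identity $\tfrac{\cosh x-1}{\sinh x}=\tanh(x/2)$ with $x=2\beta_{\mathrm{eff}}\delta$ produces exactly $\tanh(\beta_{\mathrm{eff}}\delta)$. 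The degenerate case $\delta=0$ gives $a=b=1$ and a bound of $0$, consistent with $\rho_\phi^\star=\rho_\star^\star$.

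The only genuinely delicate point is the extremal step: one must confirm that, among all laws of $R$ satisfying $\E R=1$ and $R\in[a,b]$, the two-atom distribution at the endpoints is the maximizer of $\E[(1-R)_+]$ --- but this is immediate from the pointwise secant inequality for the convex function $r\mapsto(1-r)_+$, so no optimization is actually performed. Everything else (cancelling $\rho_{\theta_{\mathrm{pre}}}$, identifying $Z_\phi/Z_\star$ as an expectation, the identity $\|\cdot\|_{\mathrm{TV}}=\E(1-R)_+$, and the hyperbolic simplification) is routine. I would also remark that a strictly sharper constant $\tanh(\beta_{\mathrm{eff}}\delta/2)$ is available by running the same extremal argument directly on the numerator $e^{-\beta_{\mathrm{eff}}\Delta}\in[e^{-\beta_{\mathrm{eff}}\delta},e^{\beta_{\mathrm{eff}}\delta}]$ while retaining its coupling to its own mean $\E_{\rho_\star^\star}[e^{-\beta_{\mathrm{eff}}\Delta}]$, but the coarser range $R\in[e^{-2\beta_{\mathrm{eff}}\delta},e^{2\beta_{\mathrm{eff}}\delta}]$ already suffices for the stated claim.
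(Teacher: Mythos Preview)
Your proof is correct and follows essentially the same route as the paper: both bound the likelihood ratio $R=d\rho_\phi^\star/d\rho_\star^\star$ pointwise in $[e^{-2\beta_{\mathrm{eff}}\delta},e^{2\beta_{\mathrm{eff}}\delta}]$ (via the same sandwich on $Z_\phi/Z_\star$ and on $e^{-\beta_{\mathrm{eff}}\Delta}$) and then deduce $\|\cdot\|_{\mathrm{TV}}\le\tanh(\beta_{\mathrm{eff}}\delta)$ from that range together with $\E_{\rho_\star^\star}[R]=1$. The only substantive difference is in how the ``bounded likelihood ratio $\Rightarrow$ TV bound'' step is executed: the paper conditions on $\{R\ge1\}$, introduces conditional means $m_\pm$, and argues the maximum of $q(m_+-1)$ under $qm_++(1-q)m_-=1$ occurs at the endpoints $m_+=e^{\varepsilon},m_-=e^{-\varepsilon}$; you instead bound the convex function $r\mapsto(1-r)_+$ pointwise by its secant on $[a,b]$ and integrate. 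Your version is slightly more self-contained, since the secant inequality is an immediate consequence of convexity and no optimization needs to be justified, while the paper's extremal claim is asserted rather than fully argued. Your closing remark that the sharper constant $\tanh(\beta_{\mathrm{eff}}\delta/2)$ is available is also correct and goes beyond what the paper states.
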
  

The bound in Eq.~\eqref{eq:tv_bound_twocol} implies that the terminal distribution error is controlled by the product $\beta_{\mathrm{eff}}\delta$. In the small-error or moderate-temperature regime, $\tanh(\beta_{\mathrm{eff}}\delta)\approx \beta_{\mathrm{eff}}\delta$, so the mismatch scales linearly with the uniform energy error. In contrast, when $\beta_{\mathrm{eff}}$ is large, even small MLFF biases can induce substantial deviations in the terminal law. This highlights a practical trade-off: aggressive alignment toward low effective temperatures requires either higher MLFF accuracy in the relevant region or stronger trust-region regularization (smaller $\beta_{\mathrm{eff}}$) to avoid amplifying model miscalibration. We would also like to highlight that this uniform error is conservative, providing a worst-case guarantee; in practice we expect smaller errors in high-density regions. In the next section, we describe how this theoretical objective is implemented via reinforcement learning.

% \begin{theorem}[Energy shaping implies an alchemical force in the learned reverse drift]  
% \label{thm:shaping-force}  
% Let $\Psi_t=-E_\phi(\hat{\z}_{0|t})$ and $r_t^{\mathrm{shape}}=\gamma\Psi_{t-1}-\Psi_t$ (Eq.~\eqref{eq:pbrs_energy}), and let the KL-regularized local update be $\rho_t^\star=\arg\max_\pi\{\mathbb E_\pi[r_t^{\mathrm{shape}}]-\tau\,\mathrm{KL}(\rho\|\rho_{\theta_{\mathrm{pre}}})\}$. Then, as $\Delta t\to 0$, the induced reverse-time drift satisfies  
% \(  
% \tilde b_t(\z)= -b_t(\z) -\lambda\,\sigma_{T-t}\sigma_{T-t}^\top \mathbf{P}_{\mathrm{CoM}}  
% \nabla_{\z} E_\phi(\hat{\z}_{0|t}(\z)),  
% \)  
% where $\lambda:=\gamma/\tau$, and $\nabla_{\z}=(\nabla_{\mathbf{x}},\nabla_{\mathbf{h}})$ (with $\nabla_{\mathbf{h}}$ taken w.r.t.\ the policy's continuous relaxation of atom types): the $\nabla_{\mathbf{x}}$ block recovers the MLFF force, while the $\nabla_{\mathbf{h}}$ block supplies a type-level preference direction unavailable to run-time energy guidance.  
% \end{theorem}

\subsection{RL Training}  
\paragraph{Force–energy disentangled GRPO (FED-GRPO).}  
To fine-tune the diffusion model, we require a policy gradient algorithm that is both sample-efficient and computationally inexpensive. Standard actor–critic methods are costly, as they would require training a separate value function alongside the score model. 
Moreover, in our setting the value function must be E(3)-invariant and defined over high-dimensional molecular states; learning such a critic via bootstrapping can introduce instability, in addition to extra compute. 
We instead adopt \emph{Group Relative Policy Optimization} (GRPO)~\citep{shao2024deepseekmath}, a critic-free algorithm, and introduce a \emph{disentangled} advantage formulation that separately accounts for energy and force signals; we refer to the resulting method as Force–Energy Disentangled GRPO (\textbf{FED-GRPO}).

\emph{Shared-prefix grouping.}  
We first define a \textit{group} over a diffusion trajectory. In large language models, all members of a group share the same prompt; for example, given a fixed math question, multiple responses are sampled in parallel and compared to identify higher-quality answers. Analogously, we extend this idea to diffusion by rolling out a common reverse-diffusion prefix from $T \rightarrow T_{\mathrm{prefix}}$ (for a chosen prefix time $T_{\mathrm{prefix}}\in\{1,\dots,T\}$) and caching $\z_{T_{\mathrm{prefix}}}$. From this shared state we branch into $K$ stochastic continuations with fresh noise. This shared initialization has also been used in concurrent work (e.g., FlowGRPO~\cite{liu2025flowgrpo}) and lowers variance by making early denoising dynamics comparable.

\emph{Two-channel advantages.}  
For each branched rollout $k$, we distinguish between dense energy feedback and sparse force feedback:  
(i) \textbf{Step-wise energy advantage:} For energy, we utilize the dense PBRS signal. We first compute the raw \textit{return-to-go} at each step $t$:  
    \(  
    G_{k,t}^{(E)} = \sum_{u=1}^{t} \gamma^{t-u}r_{k,u}^{E,\mathrm{shape}}.  
    \)  
    Crucially, we do \textbf{not} normalize $r_{k,u}$ per step; we first sum raw shaped rewards to preserve telescoping (Eq.~\eqref{eq:shape_telescope}), then normalize only at the advantage level.  
(ii) \textbf{Trajectory-level force advantage:} For force, we assign a single scalar return to the entire trajectory:  
    \(  
    G_k^{(\mathbf{F})} = - \left\|\mathbf{F}_\phi(\z_0^{(k)})\right\|_{\mathrm{rms}}^2.  
    \)

\emph{Disentangled group-relative normalization.}  
Directly summing the raw rewards is undesirable, as their differing scales and temporal structures can lead to interference, with one signal dominating the optimization dynamics. We therefore standardize the two channels differently to mitigate these effects and to disentangle their contributions according to granularity. For \textbf{force}, we compute a single group-level mean $\mu_F$ and standard deviation $\sigma_F$. For \textbf{energy}, we compute statistics \textbf{per time-step} $t$, denoted $(\mu_{E,t}, \sigma_{E,t})$, across the $K$ rollouts. This ensures that the advantage at step $t$ is relative to the group's performance \textit{at that specific step}. The total advantage for rollout $k$ at step $t$ is:  
\begin{equation}  
\hat{A}_{k,t} = w_E\underbrace{\frac{G_{k,t}^E - \mu_{E,t}}{\sigma_{E,t} + \eta}}_{\text{Step-wise Energy Adv.}} + w_{\mathbf{F}}\underbrace{\frac{G_k^F - \mu_F}{\sigma_F + \eta}}_{\text{Global Force Adv.}}.  
\end{equation}
Here, $\eta>0$ is a small constant added for numerical stability, and the force term is broadcast to the entire trajectory.

\emph{Clipped FED-GRPO objective.}  
Let the probability ratio be  
\(  
\xi_{k,t}(\theta)=  
\frac{\pi_\theta(\z_{t-1}^{(k)} \mid \z_{t}^{(k)},t)}  
{\pi_{\theta_{\text{old}}}(\z_{t-1}^{(k)} \mid \z_{t}^{(k)},t)},  
\) where all Gaussian log-probabilities are computed in the CoM-free subspace so the transition kernel has a proper density in $\mathbb{R}^{3N-3}$.
The clipped surrogate objective aggregates over all steps using the per-step advantage $\hat{A}_{k,t}$:  
\begin{small}  
\(  
L(\theta) =  
\E_{k}[  
\sum_{t=1}^{T_{\mathrm{prefix}}}  
\min\!\Big(  
\xi_{k,t}(\theta)\,\hat{A}_{k,t},\;  
\text{clip}(\xi_{k,t}(\theta), 1-\varepsilon, 1+\varepsilon)\,\hat{A}_{k,t}  
\Big)  
],  
\)  
\end{small}  
This objective maximizes (or minimizes) the likelihood of denoising trajectories proportional to their physics-based advantage, with clipping enforcing a \textit{local} (per-step) trust region that stabilizes training; a \textit{global} trust region can also be added via a KL penalty against the pretrained policy $\pi_{\theta_{\mathrm{pre}}}$. Concretely, this is given as:
\(
    \mathcal{L}_{\mathrm{reg}}(\theta)
    \;\coloneqq\;
    \mathcal{L}(\theta)
    - w_{\mathrm{KL}}\,\KL\!\left(\pi_\theta\,\Vert\,\pi_{\theta_{\mathrm{pre}}}\right),
    \label{eq:kl_regularized_objective}
\)
where $w_{\mathrm{KL}}>0$ controls the strength of the global trust region. Note that this is a tractable proxy for terminal-law trust region defined in Sec.\ref{sec:alignment_theory}. 
We summarize FED-GRPO in Algs.~\ref{alg:grpo_main}--\ref{alg:reward_calc}.

\begin{algorithm}[t]
\caption{FED-GRPO post-training}\label{alg:grpo_main}
\footnotesize
\begin{algorithmic}[1]
\State \textbf{Input:}  $\pi_{\theta_{\mathrm{old}}}$, MLFF $\phi$, $T_{\mathrm{prefix}}$, $K$, $(w_E,w_{\mathbf{F}})$, $\gamma$, clip $\varepsilon$.
\For{each iteration}
    \State  Rollout $T\!\to\!T_{\mathrm{prefix}}$ with $\pi_{\theta_{\mathrm{old}}}$; cache $\z_{\mathrm{start}}$. \Comment{Shared.}
    \For{$k=1,\dots,K$}
        \State $\tau_k \gets \textsc{Rollout}(\z_{\mathrm{start}},\pi_{\theta_{\mathrm{old}}})$
        \State $(\{G_{k,t}^{(E)}\}_{t=1}^{T_{\mathrm{prefix}}},G_k^{(\mathbf{F})}) \gets \textsc{Reward}(\tau_k, \phi,\pi_{\theta_{\mathrm{old}}})$.
    \EndFor
    \State $\hat A_k^{(\mathbf{F})} \gets (G_k^{(\mathbf{F})}-\mu_F)/(\sigma_F+\eta)$ over $k$.
    \State $\hat A_{k,t}^{(E)} \gets (G_{k,t}^{(E)}-\mu_{E,t})/(\sigma_{E,t}+\eta)$ over $k$ for each $t$.
    \State $\hat A_{k,t}\gets w_E \hat A_{k,t}^{(E)}+ w_{\mathbf{F}}\hat A_k^{(\mathbf{F})}$;
    \State
    $\xi_{k,t}\gets \pi_\theta(\z_{t-1}^{(k)}|\z_t^{(k)},t)/\pi_{\theta_{\mathrm{old}}}(\z_{t-1}^{(k)}|\z_t^{(k)},t)$.
    \State Update $\theta$ with the GRPO objective using $(\xi_{k,t},\hat A_{k,t})$; \Comment{Optionally add a KL Term.}
    \State set $\theta_{\mathrm{old}}\!\gets\!\theta$.
\EndFor

\end{algorithmic}

\end{algorithm}

\vspace{-1em}
\begin{algorithm}[h]
\caption{\textsc{Reward}: energy PBRS return-to-go + terminal force}\label{alg:reward_calc}
\footnotesize
\begin{algorithmic}[1]
\Function{\textsc{Reward}}{$\tau,\phi, \pi_{\theta_{\mathrm{old}}}$}
    \State Extract $\{\z_t\}_{t=0}^{T_{\mathrm{prefix}}}$ from $\tau$.
    \For{$t=T_{\mathrm{prefix}},\dots,0$}
        \State Compute $\hat\z_{0|t}$ from $\z_t$ using $\pi_{\theta_{\mathrm{old}}}.$  \Comment{Stop Grad.}
        \State $\Psi_t \gets -\,E_\phi(\hat\z_{0|t})$.
    \EndFor
    \For{$t=1,\dots,T_{\mathrm{prefix}}$}
        \State $G_t^{(E)} \gets \gamma^{t}\Psi_0 - \Psi_t$. \Comment{Eq.~\eqref{eq:shape_telescope}}
    \EndFor
    \State $G^{(\mathbf{F})} \gets -\|\mathbf F_\phi(\z_0)\|_{\mathrm{rms}}^2$.
    \State \Return $\{G_t^{(E)}\}_{t=1}^{T_{\mathrm{prefix}}},\,G^{(\mathbf{F})}$.
\EndFunction
\end{algorithmic}
\end{algorithm}

\section{Results}
\begin{figure}[t]
\vspace{-1em}
\centering
\includegraphics[width=\linewidth]{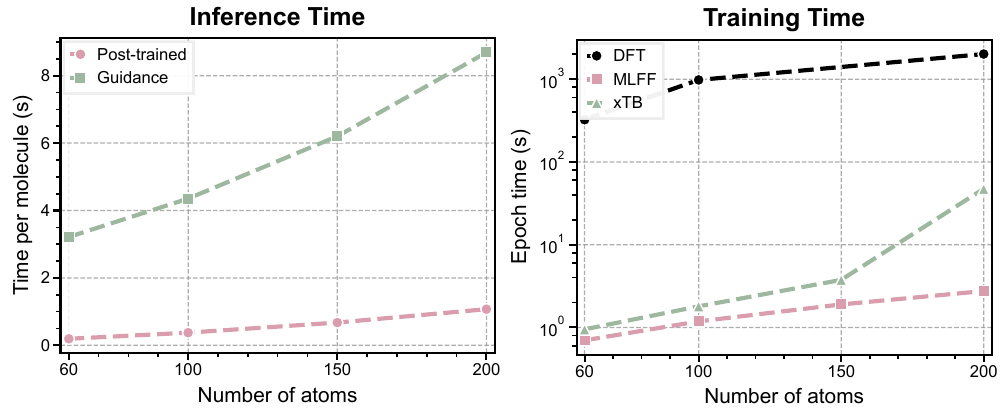}
\vspace{-2em}
\caption{
Computational efficiency comparison. \textbf{(Left)} Inference time per molecule for post-trained models versus guidance-based methods.  \textbf{(Right)} Training epoch time comparing different reward oracles on a log scale. For a fair comparison, all methods use terminal-only rewards (no PBRS). Evaluation was conducted on a node with an NVIDIA H100 GPU and Intel Xeon Sapphire Rapids CPUs (Xeon Platinum 6458Q, AVX-512 enabled).
}
\label{fig:efficiency}
\vspace{-1.5em}
\end{figure}

\textbf{Implementation.}
We use the public EDM~\citep{hoogeboom2022equivariant} checkpoint as our pretrained base model and UMA~\citep{wood2025family} as the preference model. UMA is a machine-learning force field trained on OMol25~\cite{levine2025open}, OC20~\cite{chanussot2021open}, ODAC23~\cite{sriram2024open}, OMat24~\cite{barroso2024open}, etc., that predicts per-molecule energies and per-atom forces from atom types and 3D coordinates. UMA has two variants: UMA-1p1-S and UMA-1p1-M. Unless stated otherwise, Elign uses UMA-1p1-M with potential-based reward shaping (PBRS). 
We evaluate on QM9~\cite{ramakrishnan2014quantum} and GEOM-Drugs~\cite{axelrod2022geom}. QM9 contains 130k small molecules with up to 9 heavy atoms (29 atoms including hydrogens). GEOM-Drugs consists of larger organic compounds with up to 181 atoms (44.2 on average) across 37 million conformations for around 450k molecules. Following~\citet{xu2023geometric}, we report atom stability (A), molecule stability (M), RDKit validity (V), and Validity$\times$Uniqueness (V$\times$U). QM9 results average three runs of 10{,}000 samples; GEOM-Drugs uses 1{,}024 samples. For GEOM-Drugs, we omit M and V$\times$U due to limitations in ground-truth bond inference (bond inference is unreliable for larger molecules with 60--200 atoms) and near-saturated uniqueness ($\approx 100\%$). This is in line with community standards~\citep{xu2023geometric}. Baseline results are taken from published work whenever possible. Although UMA's training data may overlap with QM9 or GEOM-Drugs, the 3D conformations generated by EDM are novel. We use UMA solely as a frozen reward oracle, and our metrics (e.g. RDKit validity, DFT energies/forces) are computed independently of UMA.  

\textbf{QM9.}
Table~\ref{tab:qm9} reports the benchmark results on QM9.
Elign improves molecule stability from 82.00\% (EDM) to 93.70\% and increases V$\times$U from 90.70\% to 95.31\%.
Compared with DFT-based RL (EDM+DFT), Elign matches atom stability and achieves higher V$\times$U (95.31\% vs.\ 92.87\%) without DFT queries during training or sampling.

\textbf{GEOM-Drugs.} Table~\ref{tab:geom_drug} evaluates larger drug-like molecules.
Elign reaches 87.94\% atom stability and 99.40\% validity, improving over EDM (81.3\% / 91.9\%) and over RLPF (xTB rewards)  on atom stability (87.94\% vs.\ 87.52\%).

\textbf{Ablation analysis.}
We analyze the contribution of individual components in Elign through controlled ablations on QM9 (Table~\ref{tab:qm9_ablation}). Our default configuration achieves the strongest overall validity$\times$uniqueness tradeoff across metrics, with validity$\times$uniqueness 95.31\% (V$\times$U). Replacing dense PBRS with sparse terminal rewards reduces V$\times$U from 95.31\% to 93.85\%, suggesting that intermediate reward signals improve optimization effectiveness over long diffusion horizons. 
Substituting the larger MLFF (UMA-1p1-M) with its smaller counterpart (UMA-1p1-S) exposes a stability--diversity tradeoff: although the smaller model slightly improves molecule stability (94.83\% vs.\ 93.70\%), it leads to a marked decrease in uniqueness, resulting in a 4.5 percentage point reduction in V$\times$U. A plausible explanation is that the smaller model induces a less smooth energy landscape, yielding higher-variance gradients that concentrate samples around a limited set of local minima. 
Further decomposing the reward highlights the complementary roles of its components. Energy-only rewards result in low molecule stability (86.00\%), while force-only rewards achieve higher stability but reduced diversity (V$\times$U of 90.21\%). Together, these results indicate that jointly optimizing energy and force objectives, combined with dense reward shaping, is important for balancing stability and diversity.

\begin{figure}[t]
\centering
\vspace{-0.5em}
\includegraphics[width=0.8\linewidth]{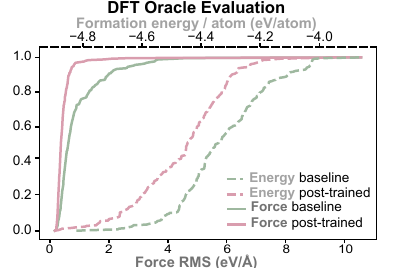}
\vspace{-0.5em}
\caption{
DFT oracle evaluation of generated molecules from QM9. Cumulative distribution functions comparing baseline and post-trained models on force RMS (left) and formation energy per atom (right). The results are obtained from 1024 samples.
}
\vspace{-1.5em}
\label{fig:dft_oracle}
\end{figure} 
% On GEOM-drug, the smaller reward model (UMA-1p1-S) exhibits a specific reward-hacking failure mode that is not detected by standard stability/validity metrics.
% The policy learns to \emph{shrink bond lengths}, producing overly compact conformers (“bond-shrinking”).
% Under UMA, these conformers receive much lower predicted energy and force norms, so energy-only or weakly shaped objectives can prefer them.
% However, DFT single-point evaluation reverses the trend: the same samples have higher DFT energies and larger DFT force norms, indicating that the improvement is an artifact of the reward model rather than true physical quality.
% Because valency-based stability metrics do not penalize overly short but valence-consistent bonds, bond-shrinking can also \emph{increase} molecule stability, masking the failure in A/M/V.
% We observe this failure mode on GEOM-drug but not on QM9, and it is mitigated by using the larger reward model (UMA-1p1-M), which reduces the exploitable bias.

% \textbf{Sparse vs.\ dense rewards and MLFF capacity.}
% Sparse terminal rewards push stability (best M in Table~\ref{tab:qm9_ablation}) but reduce V$\times$U.
% Dense PBRS shaping improves V$\times$U.
% Swapping from UMA-1p1-M to UMA-1p1-S reduces robustness to reward hacking and degrades the stability--diversity trade-off.

\textbf{DFT Oracle Evaluation.}
To verify that Elign's improvements reflect genuine physical quality rather than exploiting the MLFF reward, we evaluate generated molecules using independent DFT calculations. Specifically, we use PySCF \citep{sun2020recent} with B3LYP/6-31G(d) to compute total energies and nuclear gradients, then report formation energy per atom (total energy minus isolated atom energies, divided by atom count).  Figure~\ref{fig:dft_oracle} shows cumulative distributions for baseline EDM and Elign-trained models. Post-training shifts both distributions toward improved values: reduced force RMS indicates conformations closer to stationary points, while lower formation energy reflects increased thermodynamic stability. Because these metrics come from a DFT oracle not used during training, the results confirm that Elign improves physical fidelity beyond standard stability metrics.

\textbf{Speed.}
Figure~\ref{fig:efficiency} reports wall-clock scaling.
DFT evaluation is orders of magnitude slower than MLFF/UMA and scales steeply with system size; xTB also becomes expensive beyond $\sim$150--200 atoms. This observation supports the first level of amortization in our framework.
During inference, run-time guidance is slower because it requires oracle gradients at each denoising step, whereas Elign uses the unguided EDM sampling procedure; across 60--200 atoms, post-trained sampling is about 8--16$\times$ faster in our benchmark. This supports the second level of amortization.

\begin{table}[t]
\centering
\caption{
QM9 3D generation. Atom stability (A), molecule stability (M), RDKit validity (V), and Validity$\times$Uniqueness (V$\times$U).
}
\label{tab:qm9}
\resizebox{\linewidth}{!}{%
\begin{tabular}{lcccc}
\toprule
\textbf{Model} & \textbf{A [\%] $\uparrow$} & \textbf{M [\%] $\uparrow$} & \textbf{V [\%] $\uparrow$} & \textbf{V$\times$U [\%] $\uparrow$} \\
\midrule
EDM~\citep{hoogeboom2022equivariant}    & 98.70 & 82.00 & 91.90 & 90.70 \\
EDM + DPM Solver++~\citep{lu2022dpmsolverpp}  & 95.72 & 72.23 & 87.82 & 87.82 \\
EDM-BRIDGE~\citep{wu2022diffusion}      & 98.80 & 84.60 & 92.00 & 90.70 \\
GeoLDM~\citep{xu2023geometric}          & 98.90 & 89.40 & 93.80 & 92.70 \\
EDN~\citep{cornet2024equivariant}       & 98.90 & 89.10 & 94.80 & 92.60 \\
UniGEM~\citep{feng2024unigem}           & 99.00 & 89.80 & 95.00 & 93.20 \\
GeoBFN~\citep{song2024unified}          & 99.08 & 90.87 & 95.31 & 92.96 \\
\midrule
RLPF (EDM + DFT PPO)~\citep{zhou2025guiding}            & 99.08 & \underline{93.37} & 98.22 & 92.87 \\
EDM + Rejection Sampling~\citep{zhou2025guiding} & 98.99 & 89.47 & 93.20 & 92.60 \\
GeoLDM + xTB Guidance~\citep{shenchemistry} & 99.02 & 90.60 & 91.40 & 91.40 \\
EDM + Soft Metropolis--Hastings~\cite{feng2025soft}
    & \textbf{99.56} & 91.70 & \textbf{98.70} & 83.20 \\
GeoLDM + Soft Metropolis--Hastings~\cite{feng2025soft}
    & \underline{99.09} & 91.30 & 95.10 & \underline{94.90} \\
\midrule
EDM + UMA-1p1-S Guidance              & 98.90 & 87.00 & 92.54 & 92.34 \\
EDM + UMA-1p1-M Guidance              & 98.94 & 87.25 & 92.98 & 92.98 \\
EDM + DPM Solver++ + UMA-1p1-S Guidance   & 97.72 & 76.23 & 89.84 & 89.84 \\
\rowcolor{ourmethod}
Elign (EDM + UMA-1p1-M)   & \underline{99.33} & \textbf{93.70} & \underline{98.32} & \textbf{95.31} \\
\midrule
Data (Ground Truth)                   & 99.00 & 95.20 & 97.70 & 97.70 \\
\bottomrule
\end{tabular}%
}
\end{table}

\begin{table}[t]
\centering
\caption{
QM9 ablations. We vary reward composition (E vs F), reward density (Sparse terminal vs Dense PBRS shaping), and MLFF capacity (UMA-1p1-S vs UMA-1p1-M).
}
\label{tab:qm9_ablation}
\resizebox{\linewidth}{!}{%
\begin{tabular}{ccccccccc}
\toprule
\multicolumn{2}{c}{\textbf{Reward}} & & \multicolumn{2}{c}{\textbf{MLFF}} & & & & \\
\cmidrule(lr){1-2} \cmidrule(lr){4-5}
\textbf{E} & \textbf{F} & \textbf{PBRS} & \textbf{S} & \textbf{M} & \textbf{A [\%] $\uparrow$} & \textbf{M [\%] $\uparrow$} & \textbf{V [\%] $\uparrow$} & \textbf{V$\times$U [\%] $\uparrow$} \\
\midrule
\rowcolor{ourmethod}
\cmark & \cmark & \cmark &        & \cmark & 99.33 & 93.70 & \textbf{98.32} & \textbf{95.31} \\
\cmark & \cmark & \cmark & \cmark &        & \textbf{99.42} & 94.83 & 97.53 & 90.81 \\
\midrule
\cmark & \cmark &        &        & \cmark & 99.02 & 93.75 & 96.54 & 93.85 \\
\cmark &        &        &        & \cmark & 98.70 & 86.00 & 91.70 & 91.70 \\
       & \cmark &        &        & \cmark & 99.40 & \textbf{94.92} & 96.84 & 90.21 \\
\bottomrule
\end{tabular}%
}

\end{table}

\section{Related Work.}
\textbf{Diffusion Models for Molecules.} For molecular generation, equivariant diffusion have achieved strong geometric fidelity while respecting symmetries~\citep{hoogeboom2022equivariant,xu2023geometric,cornet2024equivariant,peng2023moldiff,song2023equivariant,song2024unified, feng2024unigem,qiang2023coarse}. Extensions incorporating physical guidance or control include energy-informed priors~\citep{wu2022diffusion,shenchemistry}, reinforcement learning with DFT~\citep{zhou2025guiding}, and stochastic control formulations such as Adjoint Matching and Adjoint Schrödinger Bridge Samplers~\citep{havens2025adjoint,liu2025adjoint}. While effective in biasing samples toward desired properties, these methods typically rely on sparse rewards, differentiable objectives, or run-time oracle evaluations, potentially limiting their scalability. 
We also include an expanded related-work appendix.

\begin{table}[H]
\centering
\vspace{-0.5em}
\caption{
GEOM-drug 3D generation. Atom stability (A) and RDKit validity (V). 
}
\vspace{-0.5em}
\label{tab:geom_drug}
\resizebox{0.75\linewidth}{!}{
\begin{tabular}{lcc}
\toprule
\textbf{Model} & \textbf{A [\%] $\uparrow$} & \textbf{V [\%] $\uparrow$} \\
\midrule
EDM~\citep{hoogeboom2022equivariant}      & 81.3  & 91.9  \\
EDM-BRIDGE~\citep{wu2022diffusion}        & 82.4  & 91.9  \\
GeoLDM~\citep{xu2023geometric}            & 84.4  & 99.3 \\
EDN~\citep{cornet2024equivariant}         & 87.0 & 92.9  \\
UniGEM~\citep{feng2024unigem}             & 85.1  & 98.4  \\
GeoBFN~\citep{song2024unified}            & 85.6  & 92.08 \\
\midrule
RLPF (EDM+xTB~\citep{zhou2025guiding} PPO) & 87.52 & 99.20 \\
\rowcolor{ourmethod}
Elign (EDM+ UMA-1p1-M) & \textbf{87.94} & \textbf{99.40} \\
\midrule
Data (Ground Truth) & 86.5 & 99.9 \\
\bottomrule
\end{tabular}}
\vspace{-1.3em}
\end{table}

\section{Conclusion \& Outlook}
We presented Elign, a framework that post-trains equivariant molecular diffusion models with physical preferences using a pretrained foundational MLFF. By casting reverse diffusion as an RL problem and optimizing a force–energy disentangled GRPO objective, Elign improves thermodynamic stability and mechanical equilibrium while preserving fast, unguided inference. 
Future work includes improving alignment under single-objective feedback (energy-only or force-only) and enabling strong performance with smaller preference models.

\section*{Impact Statement}
This paper presents work whose goal is to advance the field
of Machine Learning. There are many potential societal
consequences of our work, none which we feel must be
specifically highlighted here.

\bibliography{ref}

@article{axelrod2022geom,
	 author = {Axelrod, Simon and G{\'o}mez-Bombarelli, Rafael},
	 doi = {10.1038/s41597-022-01288-4},
	 journal = {Scientific Data},
	 number = {1},
	 pages = {185},
	 title = {GEOM, energy-annotated molecular conformations for property prediction and molecular generation},
	 url = {https://doi.org/10.1038/s41597-022-01288-4},
	 volume = {9},
	 year = {2022}
}

@article{ramakrishnan2014quantum,
  title={Quantum chemistry structures and properties of 134 kilo molecules},
  author={Ramakrishnan, Raghunathan and Dral, Pavlo O and Rupp, Matthias and Von Lilienfeld, O Anatole},
  journal={Scientific data},
  volume={1},
  number={1},
  pages={1--7},
  year={2014},
  publisher={Nature Publishing Group}
}

@article{barroso2024open,
  title={Open materials 2024 (omat24) inorganic materials dataset and models},
  author={Barroso-Luque, Luis and Shuaibi, Muhammed and Fu, Xiang and Wood, Brandon M and Dzamba, Misko and Gao, Meng and Rizvi, Ammar and Zitnick, C Lawrence and Ulissi, Zachary W},
  journal={arXiv preprint arXiv:2410.12771},
  year={2024}
}

@misc{sriram2024open,
  title={The Open DAC 2023 dataset and challenges for sorbent discovery in direct air capture},
  author={Sriram, Anuroop and Choi, Sihoon and Yu, Xiaohan and Brabson, Logan M and Das, Abhishek and Ulissi, Zachary and Uyttendaele, Matt and Medford, Andrew J and Sholl, David S},
  year={2024},
  publisher={ACS Publications}
}

@article{tran2023open,
  title={The Open Catalyst 2022 (OC22) dataset and challenges for oxide electrocatalysts},
  author={Tran, Richard and Lan, Janice and Shuaibi, Muhammed and Wood, Brandon M and Goyal, Siddharth and Das, Abhishek and Heras-Domingo, Javier and Kolluru, Adeesh and Rizvi, Ammar and Shoghi, Nima and others},
  journal={ACS Catalysis},
  volume={13},
  number={5},
  pages={3066--3084},
  year={2023},
  publisher={ACS Publications}
}

@article{levine2025open,
  title={The open molecules 2025 (omol25) dataset, evaluations, and models},
  author={Levine, Daniel S and Shuaibi, Muhammed and Spotte-Smith, Evan Walter Clark and Taylor, Michael G and Hasyim, Muhammad R and Michel, Kyle and Batatia, Ilyes and Cs{\'a}nyi, G{\'a}bor and Dzamba, Misko and Eastman, Peter and others},
  journal={arXiv preprint arXiv:2505.08762},
  year={2025}
}

@article{song2023equivariant,
  title={Equivariant flow matching with hybrid probability transport for 3d molecule generation},
  author={Song, Yuxuan and Gong, Jingjing and Xu, Minkai and Cao, Ziyao and Lan, Yanyan and Ermon, Stefano and Zhou, Hao and Ma, Wei-Ying},
  journal={Advances in Neural Information Processing Systems},
  volume={36},
  pages={549--568},
  year={2023}
}

@inproceedings{peng2023moldiff,
  title     = {MolDiff: Addressing the Atom-Bond Inconsistency Problem in 3D Molecule Diffusion Generation},
  author    = {Peng, Xingang and Guan, Jiaqi and Liu, Qiang and Ma, Jianzhu},
  booktitle = {Proceedings of the 40th International Conference on Machine Learning},
  pages     = {27611--27629},
  year      = {2023},
  editor    = {Krause, Andreas and Brunskill, Emma and Cho, Kyunghyun and Engelhardt, Barbara and Sabato, Sivan and Scarlett, Jonathan},
  volume    = {202},
  series    = {Proceedings of Machine Learning Research},
  month     = {23--29 Jul},
  publisher = {PMLR},
  pdf       = {https://proceedings.mlr.press/v202/peng23b/peng23b.pdf},
  url       = {https://proceedings.mlr.press/v202/peng23b.html}
}

@inproceedings{qiang2023coarse,
  title={Coarse-to-Fine: a Hierarchical Diffusion Model for Molecule Generation in 3D},
  author={Qiang, Bo and Song, Yuxuan and Xu, Minkai and Gong, Jingjing and Gao, Bowen and Zhou, Hao and Ma, Wei-Ying and Lan, Yanyan},
  booktitle={International Conference on Machine Learning},
  pages={28277--28299},
  year={2023},
  organization={PMLR}
}

@inproceedings{fulearning,
  title={Learning Smooth and Expressive Interatomic Potentials for Physical Property Prediction},
  author={Fu, Xiang and Wood, Brandon M and Barroso-Luque, Luis and Levine, Daniel S and Gao, Meng and Dzamba, Misko and Zitnick, C Lawrence},
  booktitle={Forty-second International Conference on Machine Learning},
  year ={2025}
}

@inproceedings{li2025e2former,
  title={E2former: An efficient and equivariant transformer with linear-scaling tensor products},
  author={Li, Yunyang and Huang, Lin and Ding, Zhihao and Wei, Xinran and Wang, Chu and Yang, Han and Wang, Zun and Liu, Chang and Shi, Yu and Jin, Peiran and others},
  booktitle={The Thirty-ninth Annual Conference on Neural Information Processing Systems},
  journal = {arXiv preprint arXiv:2501.19216},
  url = {https://openreview.net/pdf?id=ls5L4IMEwt},
  year={2025}
}

@inproceedings{lienhancing,
  title={Enhancing the Scalability and Applicability of {K}ohn-{S}ham {H}amiltonians for Molecular Systems},
  author={Li, Yunyang and Xia, Zaishuo and Huang, Lin and Wei, Xinran and Harshe, Samuel and Yang, Han and Luo, Erpai and Wang, Zun and Zhang, Jia and Liu, Chang and others},
  booktitle={The Thirteenth International Conference on Learning Representations},
year = {2024}
}

@inproceedings{
liu2025flowgrpo,
title={Flow-{GRPO}: Training Flow Matching Models via Online {RL}},
author={Jie Liu and Gongye Liu and Jiajun Liang and Yangguang Li and Jiaheng Liu and Xintao Wang and Pengfei Wan and Di Zhang and Wanli Ouyang},
booktitle={The Thirty-ninth Annual Conference on Neural Information Processing Systems},
year={2025},
url={https://openreview.net/forum?id=oCBKGw5HNf}
}

@inproceedings{
havens2025adjoint,
title={Adjoint Sampling: Highly Scalable Diffusion Samplers via Adjoint Matching},
author={Aaron J Havens and Benjamin Kurt Miller and Bing Yan and Carles Domingo-Enrich and Anuroop Sriram and Daniel S. Levine and Brandon M Wood and Bin Hu and Brandon Amos and Brian Karrer and Xiang Fu and Guan-Horng Liu and Ricky T. Q. Chen},
booktitle={Forty-second International Conference on Machine Learning},
year={2025},
url={https://openreview.net/forum?id=6Eg1OrHmg2}
}

@inproceedings{
liu2025adjoint,
title={Adjoint Schr\"odinger Bridge Sampler},
author={Guan-Horng Liu and Jaemoo Choi and Yongxin Chen and Benjamin Kurt Miller and Ricky T. Q. Chen},
booktitle={The Thirty-ninth Annual Conference on Neural Information Processing Systems},
year={2025},
url={https://openreview.net/forum?id=rMhQBlhh4c}
}

@article{feng2025soft,
  title={Soft Metropolis-Hastings Correction for Generative Model Sampling},
  author={Feng, Hanqi and Qiu, Peng and Zhang, Meng-Chun and Fan, You and Tao, Yiran and Poczos, Barnabas},
  journal={bioRxiv preprint},
  year={2025},
  publisher={Cold Spring Harbor Laboratory}
}

@inproceedings{shenchemistry,
  title={Chemistry-Inspired Diffusion with Non-Differentiable Guidance},
  author={Shen, Yuchen and Zhang, Chenhao and Fu, Sijie and Zhou, Chenghui and Washburn, Newell and Poczos, Barnabas},
  booktitle={The Thirteenth International Conference on Learning Representations},
  year={2024},
}

@article{zhou2025guiding,
  title={Guiding Diffusion Models with Reinforcement Learning for Stable Molecule Generation},
  author={Zhou, Zhijian and An, Junyi and Liu, Zongkai and Shi, Yunfei and Zhang, Xuan and Cao, Fenglei and Qu, Chao and Qi, Yuan},
  journal={arXiv preprint arXiv:2508.16521},
  year={2025}
}

@inproceedings{zaidi2022pre,
  title={Pre-training via Denoising for Molecular Property Prediction},
  author={Zaidi, Sahil and Schaarschmidt, Michael and Martens, James and Kim, Hyunjik and Teh, Yee Whye and Sanchez-Gonzalez, Alvaro and Battaglia, Peter and Pascanu, Razvan and Godwin, Jonathan},
  booktitle={The Eleventh International Conference on Learning Representations},
  year={2022}
}

@article{wood2025family,
  title={UMA: A Family of Universal Models for Atoms},
  author={Wood, Brandon M and Dzamba, Misko and Fu, Xiang and Gao, Meng and Shuaibi, Muhammed and Barroso-Luque, Luis and Abdelmaqsoud, Kareem and Gharakhanyan, Vahe and Kitchin, John R and Levine, Daniel S and others},
  journal={arXiv preprint arXiv:2506.23971},
  year={2025}
}

@article{mannan2025evaluating,
  title={Evaluating Universal Machine Learning Force Fields Against Experimental Measurements},
  author={Mannan, Sajid and Bihani, Vaibhav and Gonzales, Carmelo and Lee, Kin Long Kelvin and Gosvami, Nitya Nand and Ranu, Sayan and Miret, Santiago and Krishnan, NM},
  journal={arXiv preprint arXiv:2508.05762},
  year={2025}
}

@article{amin2025towards,
  title={Towards fast, specialized machine learning force fields: Distilling foundation models via energy {H}essians},
  author={Amin, Ishan and Raja, Sanjeev and Krishnapriyan, Aditi},
  journal={arXiv preprint arXiv:2501.09009},
  year={2025}
}

@inproceedings{hoogeboom2022equivariant,
  title={Equivariant Diffusion for Molecule Generation in {3D}},
  author={Hoogeboom, Emiel and Garcia Satorras, V{\'\i}ctor and Vignac, Cl{\'e}ment and Welling, Max},
  booktitle={International Conference on Machine Learning},
  pages={8867--8887},
  year={2022},
  organization={PMLR}
}

@inproceedings{wu2022diffusion,
  title={Diffusion-based Molecule Generation with Informative Prior Bridges},
  author={Wu, Lemeng and Gong, Chengyue and Liu, Xingchao and Ye, Mao and Liu, Qiang},
  booktitle={Advances in Neural Information Processing Systems},
  volume={35},
  pages={36533--36545},
  year={2022}
}

@inproceedings{xu2023geometric,
  title={Geometric Latent Diffusion Models for {3D} Molecule Generation},
  author={Xu, Minkai and Powers, Alexander and Dror, Ron and Ermon, Stefano and Leskovec, Jure},
  booktitle={International Conference on Machine Learning},
  pages={38592--38610},
  year={2023},
  organization={PMLR}
}

@article{cornet2024equivariant,
  title={Equivariant Neural Diffusion for Molecule Generation},
  author={Cornet, Fran{\c{c}}ois and Bartosh, Grigory and Schmidt, Mikkel N and Naesseth, Christian A},
  journal={arXiv preprint arXiv:2506.10532},
  year={2025}
}

@article{feng2024unigem,
  title={UniGEM: A Unified Approach to Generation and Property Prediction for Molecules},
  author={Feng, Shikun and Zhou, Yuyan and Li, Yanyan and Wang, Yufei and Quan, Yuhui and Ju, Peixian and Chen, Weikai and Wang, Yang and Zheng, Mingyue and Liu, Jingjing and others},
  journal={arXiv preprint arXiv:2410.10516},
  year={2024}
}

@inproceedings{song2024unified,
  title={Unified Generative Modeling of {3D} Molecules via {B}ayesian {F}low {N}etworks},
  author={Song, Yuxuan and Gong, Jingjing and Zhou, Hao and Zheng, Mingyue and Liu, Jingjing and Ma, Wei-Ying},
  booktitle={International Conference on Learning Representations},
  year={2024}
}

@inproceedings{aykent2025gotennet,
  author = {Aykent, Sarp and Xia, Tian},
  booktitle = {The Thirteenth International Conference on Learning Representations},
  year = {2025},
  title = {{GotenNet: Rethinking Efficient {3D} Equivariant Graph Neural Networks}},
  url = {https://openreview.net/forum?id=5wxCQDtbMo},
  howpublished = {https://openreview.net/forum?id=5wxCQDtbMo},
}

@article{qu2024importance,
  title={The importance of being scalable: Improving the speed and accuracy of neural network interatomic potentials across chemical domains},
  author={Qu, Eric and Krishnapriyan, Aditi},
  journal={Advances in Neural Information Processing Systems},
  volume={37},
  pages={139030--139053},
  year={2024}
}

@article{isert2022qmugs,
  title={QMugs, quantum mechanical properties of drug-like molecules},
  author={Isert, Clemens and Atz, Kenneth and Jim{\'e}nez-Luna, Jos{\'e} and Schneider, Gisbert},
  journal={Scientific Data},
  volume={9},
  number={1},
  pages={273},
  year={2022},
  publisher={Nature Publishing Group UK London}
}

@article{bannwarth2019gfn2,
  title={GFN2-xTB—An accurate and broadly parametrized self-consistent tight-binding quantum chemical method with multipole electrostatics and density-dependent dispersion contributions},
  author={Bannwarth, Christoph and Ehlert, Sebastian and Grimme, Stefan},
  journal={Journal of Chemical Theory and Computation},
  volume={15},
  number={3},
  pages={1652--1671},
  year={2019},
  publisher={ACS Publications}
}

@article{eastman2023spice,
  title={Spice, a dataset of drug-like molecules and peptides for training machine learning potentials},
  author={Eastman, Peter and Behara, Pavan Kumar and Dotson, David L and Galvelis, Raimondas and Herr, John E and Horton, Josh T and Mao, Yuezhi and Chodera, John D and Pritchard, Benjamin P and Wang, Yuanqing and others},
  journal={Scientific Data},
  volume={10},
  number={1},
  pages={11},
  year={2023},
  publisher={Nature Publishing Group UK London}
}

@article{unke2024biomolecular,
  title={Biomolecular dynamics with machine-learned quantum-mechanical force fields trained on diverse chemical fragments},
  author={Unke, Oliver T and St{\"o}hr, Martin and Ganscha, Stefan and Unterthiner, Thomas and Maennel, Hartmut and Kashubin, Sergii and Ahlin, Daniel and Gastegger, Michael and Medrano Sandonas, Leonardo and Berryman, Joshua T and others},
  journal={Science Advances},
  volume={10},
  number={14},
  pages={eadn4397},
  year={2024},
  publisher={American Association for the Advancement of Science}
}

@article{chanussot2021open,
  title={Open {C}atalyst 2020 ({OC20}) dataset and community challenges},
  author={Chanussot, Lowik and Das, Abhishek and Goyal, Siddharth and Lavril, Thibaut and Shuaibi, Muhammed and Riviere, Morgane and Tran, Kevin and Heras-Domingo, Javier and Ho, Caleb and Hu, Weihua and others},
  journal={ACS Catalysis},
  volume={11},
  number={10},
  pages={6059--6072},
  year={2021},
  publisher={ACS Publications}
}

@article{gasteiger2021gemnet,
  title={Gemnet: Universal directional graph neural networks for molecules},
  author={Gasteiger, Johannes and Becker, Florian and G{\"u}nnemann, Stephan},
  journal={Advances in Neural Information Processing Systems},
  volume={34},
  pages={6790--6802},
  year={2021}
}

@inproceedings{
wang2022comenet,
title={Com{EN}et: Towards Complete and Efficient Message Passing for {3D} Molecular Graphs},
author={Limei Wang and Yi Liu and Yuchao Lin and Haoran Liu and Shuiwang Ji},
booktitle={Advances in Neural Information Processing Systems},
editor={Alice H. Oh and Alekh Agarwal and Danielle Belgrave and Kyunghyun Cho},
year={2022},
url={https://openreview.net/forum?id=mCzMqeWSFJ}
}

@inproceedings{
batatia2022mace,
title={{MACE}: Higher Order Equivariant Message Passing Neural Networks for Fast and Accurate Force Fields},
author={Ilyes Batatia and David Peter Kovacs and Gregor N. C. Simm and Christoph Ortner and Gabor Csanyi},
booktitle={Advances in Neural Information Processing Systems},
editor={Alice H. Oh and Alekh Agarwal and Danielle Belgrave and Kyunghyun Cho},
year={2022},
url={https://openreview.net/forum?id=YPpSngE-ZU}
}

@article{fuchs2020se,
  title={{SE}(3)-transformers: {3D} roto-translation equivariant attention networks},
  author={Fuchs, Fabian and Worrall, Daniel and Fischer, Volker and Welling, Max},
  journal={Advances in Neural Information Processing Systems},
  volume={33},
  pages={1970--1981},
  year={2020}
}

@article{schutt2018schnet,
  title={Schnet--a deep learning architecture for molecules and materials},
  author={Sch{\"u}tt, Kristof T and Sauceda, Huziel E and Kindermans, P-J and Tkatchenko, Alexandre and M{\"u}ller, K-R},
  journal={The Journal of Chemical Physics},
  volume={148},
  number={24},
  pages={241722},
  year={2018},
  publisher={AIP Publishing LLC}
}

@article{behler2021four,
  title={Four generations of high-dimensional neural network potentials},
  author={Behler, J{\"o}rg},
  journal={Chemical Reviews},
  volume={121},
  number={16},
  pages={10037--10072},
  year={2021},
  publisher={ACS Publications}
}

@article{unke2021machine,
  title={Machine learning force fields},
  author={Unke, Oliver T and Chmiela, Stefan and Sauceda, Huziel E and Gastegger, Michael and Poltavsky, Igor and Sch{\"u}tt, Kristof T and Tkatchenko, Alexandre and Mu{\"u}ller, Klaus-Robert},
  journal={Chemical Reviews},
  volume={121},
  number={16},
  pages={10142--10186},
  year={2021},
  publisher={ACS Publications}
}

@article{hohenberg1964inhomogeneous,
  title={Inhomogeneous electron gas},
  author={Hohenberg, Pierre and Kohn, Walter},
  journal={Physical Review},
  volume={136},
  number={3B},
  pages={B864},
  year={1964},
  publisher={APS}
}

@article{sun2020recent,
  title={Recent developments in the PySCF program package},
  author={Sun, Qiming and Zhang, Xing and Banerjee, Samragni and Bao, Peng and Barbry, Marc and Blunt, Nick S and Bogdanov, Nikolay A and Booth, George H and Chen, Jia and Cui, Zhi-Hao and others},
  journal={The Journal of chemical physics},
  volume={153},
  number={2},
  year={2020},
  publisher={AIP Publishing}
}

@inproceedings{passaro2023reducing,
  title={Reducing {SO}(3) convolutions to {SO}(2) for efficient equivariant {GNN}s},
  author={Passaro, Saro and Zitnick, C Lawrence},
  booktitle={International Conference on Machine Learning},
  pages={27420--27438},
  year={2023},
  organization={PMLR}
}

@article{kohn1965self,
  title={Self-consistent equations including exchange and correlation effects},
  author={Kohn, Walter and Sham, Lu Jeu},
  journal={Physical Review},
  volume={140},
  number={4A},
  pages={A1133},
  year={1965},
  publisher={APS}
}

@article{chen2019graph,
  title={Graph networks as a universal machine learning framework for molecules and crystals},
  author={Chen, Chi and Ye, Weike and Zuo, Yunxing and Zheng, Chen and Ong, Shyue Ping},
  journal={Chemistry of Materials},
  volume={31},
  number={9},
  pages={3564--3572},
  year={2019},
  publisher={ACS Publications}
}

@article{gasteiger2020directional,
  title={Directional message passing for molecular graphs},
  author={Gasteiger, Johannes and Gro{\ss}, Janek and G{\"u}nnemann, Stephan},
  journal={arXiv preprint arXiv:2003.03123},
  year={2020}
}

@inproceedings{liu2022spherical,
  title={Spherical message passing for {3D} molecular graphs},
  author={Liu, Yi and Wang, Limei and Liu, Meng and Lin, Yuchao and Zhang, Xuan and Oztekin, Bora and Ji, Shuiwang},
  booktitle={International Conference on Learning Representations (ICLR)},
  year={2022}
}

@inproceedings{satorras2021n,
  title={E(n)-equivariant graph neural networks},
  author={Satorras, V{\i}ctor Garcia and Hoogeboom, Emiel and Welling, Max},
  booktitle={International Conference on Machine Learning},
  pages={9323--9332},
  year={2021},
  organization={PMLR}
}

@inproceedings{du2022se,
  title={{SE}(3)-equivariant graph neural networks with complete local frames},
  author={Du, Weitao and Zhang, He and Du, Yuanqi and Meng, Qi and Chen, Wei and Zheng, Nanning and Shao, Bin and Liu, Tie-Yan},
  booktitle={International Conference on Machine Learning},
  pages={5583--5608},
  year={2022},
  organization={PMLR}
}

@article{simeon2024tensornet,
  title={Tensornet: Cartesian tensor representations for efficient learning of molecular potentials},
  author={Simeon, Guillem and De Fabritiis, Gianni},
  journal={Advances in Neural Information Processing Systems},
  volume={36},
  year={2024}
}

@article{batzner2022e3,
  title={E(3)-equivariant graph neural networks for data-efficient and accurate interatomic potentials},
  author={Batzner, Simon and Musaelian, Albert and Sun, Lixin and Geiger, Mario and Mailoa, Jonathan P and Kornbluth, Mordechai and Molinari, Nicola and Smidt, Tess E and Kozinsky, Boris},
  journal={Nature Communications},
  volume={13},
  number={1},
  pages={2453},
  year={2022},
  publisher={Nature Publishing Group UK London}
}

@article{musaelian2023learning,
  title={Learning local equivariant representations for large-scale atomistic dynamics},
  author={Musaelian, Albert and Batzner, Simon and Johansson, Anders and Sun, Lixin and Owen, Cameron J and Kornbluth, Mordechai and Kozinsky, Boris},
  journal={Nature Communications},
  volume={14},
  number={1},
  pages={579},
  year={2023},
  publisher={Nature Publishing Group UK London}
}

@article{feng2023may,
  title={May the Force be with You: Unified Force-Centric Pre-Training for {3D} Molecular Conformations},
  author={Feng, Rui and Zhu, Qi and Tran, Huan and Chen, Binghong and Toland, Aubrey and Ramprasad, Rampi and Zhang, Chao},
  journal={arXiv preprint arXiv:2308.14759},
  year={2023}
}

@inproceedings{
    liao2023equiformer,
    title={Equiformer: Equivariant Graph Attention Transformer for {3D} Atomistic Graphs},
    author={Yi-Lun Liao and Tess Smidt},
    booktitle={International Conference on Learning Representations},
    year={2023},
    url={https://openreview.net/forum?id=KwmPfARgOTD}
}

@article{chmiela2017machine,
  title={Machine learning of accurate energy-conserving molecular force fields},
  author={Chmiela, Stefan and Tkatchenko, Alexandre and Sauceda, Huziel E and Poltavsky, Igor and Sch{\"u}tt, Kristof T and M{\"u}ller, Klaus-Robert},
  journal={Science Advances},
  volume={3},
  number={5},
  pages={e1603015},
  year={2017},
  publisher={American Association for the Advancement of Science}
}

@article{chmiela2018towards,
  title={Towards exact molecular dynamics simulations with machine-learned force fields},
  author={Chmiela, Stefan and Sauceda, Huziel E and M{\"u}ller, Klaus-Robert and Tkatchenko, Alexandre},
  journal={Nature Communications},
  volume={9},
  number={1},
  pages={1--10},
  year={2018},
  publisher={Nature Publishing Group}
}

@article{schutt2017quantum,
  title={Quantum-chemical insights from deep tensor neural networks},
  author={Sch{\"u}tt, Kristof T and Arbabzadah, Farhad and Chmiela, Stefan and M{\"u}ller, Klaus R and Tkatchenko, Alexandre},
  journal={Nature Communications},
  volume={8},
  number={1},
  pages={1--8},
  year={2017},
  publisher={Nature Publishing Group}
}

@article{car1985unified,
  title={Unified approach for molecular dynamics and density-functional theory},
  author={Car, Richard and Parrinello, Mark},
  journal={Physical Review Letters},
  volume={55},
  number={22},
  pages={2471},
  year={1985},
  publisher={APS}
}

@article{li2023longshortrange,
      title={Long-Short-Range Message-Passing: A Physics-Informed Framework to Capture Non-Local Interaction for Scalable Molecular Dynamics Simulation}, 
      author={Yunyang Li and Yusong Wang and Lin Huang and Han Yang and Xinran Wei and Jia Zhang and Tong Wang and Zun Wang and Bin Shao and Tie-Yan Liu},
      journal={arXiv preprint arXiv:2304.13542},
      year={2023},
      primaryClass={physics.chem-ph}
}

@article{black2023ddpo,
  title={Training Diffusion Models with Reinforcement Learning},
  author={Black, Kevin and Janner, Michael and Du, Yilun and Kostrikov, Ilya and Levine, Sergey},
  journal={arXiv preprint arXiv:2305.13301},
  year={2024}
}

@article{xue2025dancegrpo,
  title={DanceGRPO: Unleashing GRPO on Visual Generation},
  author={Xue, Zeyue and Wu, Jie and Gao, Yu and  Kong, Fangyuan and Zhu, Lingting and Chen, Mengzhao and Liu, Zhiheng and Liu, Wei and Guo, Qiushan and Huang, Weilin and Luo, Ping},
  journal={arXiv preprint arXiv:2505.07818},
  year={2025}
}

@book{nelson1967brownian,
  title = {Dynamical Theories of Brownian Motion},
  author = {Nelson, Edward},
  year = {1967},
  publisher = {Princeton University Press},
  address = {Princeton, NJ}
}

@inproceedings{ng1999policy,
  title = {Policy Invariance under Reward Transformations: Theory and Application to Reward Shaping},
  author = {Ng, Andrew Y. and Harada, Daishi and Russell, Stuart},
  booktitle = {Proceedings of the International Conference on Machine Learning},
  year = {1999},
  pages = {278--287}
}

@article{hart1968astar,
  title = {A Formal Basis for the Heuristic Determination of Minimum Cost Paths},
  author = {Hart, Peter E. and Nilsson, Nils J. and Raphael, Bertram},
  journal = {IEEE Transactions on Systems Science and Cybernetics},
  volume = {4},
  number = {2},
  pages = {100--107},
  year = {1968}
}

@article{spall1992spsa,
  title = {Multivariate stochastic approximation using a simultaneous perturbation gradient approximation},
  author = {Spall, James C.},
  journal = {IEEE Transactions on Automatic Control},
  volume = {37},
  number = {3},
  pages = {332--341},
  year = {1992}
}

@article{lu2022dpmsolverpp,
  title = {DPM-Solver++: Fast Solver for Guided Sampling of Diffusion Probabilistic Models},
  author = {Cheng Lu and Yuhao Zhou and Fan Bao and Jianfei Chen and Chongxuan Li and Jun Zhu},
  journal = {arXiv preprint arXiv:2211.01095},
  year = {2022}
}

@article{shao2024deepseekmath,
  title = {Deep{S}eek{M}ath: Pushing the Limits of Mathematical Reasoning in Open Language Models},
  author = {Shao, Zhihong and Wang, Peiyi and Zhu, Qihao and Xu, Runxin and Song, Junxiao and Bi, Xiao and Zhang, Haowei and Zhang, Mingchuan and Li, Y. K. and Wu, Y. and Guo, Daya},
  journal = {arXiv preprint arXiv:2402.03300},
  year = {2024}
}
\bibliographystyle{icml2026}

%%%%%%%%%%%%%%%%%%%%%%%%%%%%%%%%%%%%%%%%%%%%%%%%%%%%%%%%%%%%%%%%%%%%%%%%%%%%%%%
%%%%%%%%%%%%%%%%%%%%%%%%%%%%%%%%%%%%%%%%%%%%%%%%%%%%%%%%%%%%%%%%%%%%%%%%%%%%%%%
% APPENDIX
%%%%%%%%%%%%%%%%%%%%%%%%%%%%%%%%%%%%%%%%%%%%%%%%%%%%%%%%%%%%%%%%%%%%%%%%%%%%%%%
%%%%%%%%%%%%%%%%%%%%%%%%%%%%%%%%%%%%%%%%%%%%%%%%%%%%%%%%%%%%%%%%%%%%%%%%%%%%%%%
\newpage
\appendix

\onecolumn
%%%%%%%%%%%%%%%%%%%%%%%%%%%%%%%%%%%%%%%%%%%%%%%%%%%%%%%%%%%%%%%%%%%%%%%%%%%%%%%
%%%%%%%%%%%%%%%%%%%%%%%%%%%%%%%%%%%%%%%%%%%%%%%%%%%%%%%%%%%%%%%%%%%%%%%%%%%%%%%
% NOTE: icml2026.sty disables \addcontentsline globally, which prevents ToC
% entries from being written. We restore it here so appendix sections appear.
\makeatletter
% Use hyperref's current anchor (\@currentHref) as the 4th field in \contentsline,
% so ToC entries become clickable.
\def\addcontentsline#1#2#3{\addtocontents{#1}{\protect\contentsline{#2}{#3}{\thepage}{\@currentHref}}}
\makeatother

\DoToC
% \onecolumn
\newpage
\section{Hyperparameters}
\noindent\textbf{Mapping to paper notation.} In each RL iteration, we sample $N_{\mathrm{grp}}$ shared-prefix groups (each group corresponds to a fixed prefix state $\z_{T_{\mathrm{prefix}}}$) and generate $K$ rollouts per group. We use $w_F=\texttt{reward.force\_adv\_weight}$ and $w_E=\texttt{reward.energy\_adv\_weight}$.
\begin{table}[H]
\centering
\caption{Hyperparameters for RL post-training (QM9)}
\label{tab:hyperparameter-rl-qm9}
\small
\begin{tabular*}{\textwidth}{@{\extracolsep{\fill}}>{\ttfamily}l >{\raggedright\arraybackslash}p{0.5\textwidth} >{\raggedright\arraybackslash}p{0.11\textwidth}}
\toprule
Notation & Description & Value \\
\midrule
\texttt{train.learning\_rate} & Learning rate for policy optimization & $4\times 10^{-6}$ \\
\texttt{train.clip\_range} & PPO clipping threshold & $2\times 10^{-3}$ \\
\texttt{train.train\_micro\_batch\_size} & Micro-batch size for PPO updates & 8 \\
\texttt{train.gradient\_accumulation\_steps} & Gradient accumulation steps per PPO update & 1 \\
\texttt{train.epoch\_per\_rollout} & PPO optimization epochs per rollout batch & 1 \\
\texttt{train.kl\_penalty\_weight} & KL regularization weight (vs.\ reference policy) & 0.08 \\
\texttt{train.adv\_clip\_max} & Max magnitude for clipped advantages & 5 \\
\texttt{train.max\_grad\_norm} & Gradient norm clipping threshold & 1.0 \\
\texttt{dataloader.sample\_group\_size} & \#groups per iteration ($N_{\mathrm{grp}}$) & 4 \\
\texttt{dataloader.each\_prompt\_sample} & $K$ rollouts per group & 6 \\
\texttt{dataloader.micro\_batch\_size} & Rollout-generation micro-batch size & 24 \\
\texttt{dataloader.epochs} & Number of rollout/update iterations & 200 \\
\texttt{model.time\_step} & Diffusion timesteps used during rollouts & 1000 \\
\texttt{reward.mlff\_model} & MLFF model identifier & \texttt{uma-m-1p1} (\texttt{uma-s-1p1} optional) \\
\texttt{reward.shaping.mlff\_batch\_size} & Batch size for reward evaluation (chunking) & 32 \\
\texttt{reward.force\_aggregation} & Aggregation for force errors & \texttt{rms} (\texttt{max} optional) \\
\texttt{reward.force\_clip\_threshold} & Per-atom force magnitude clip before aggregation & 2.0 \\
\texttt{reward.force\_adv\_weight} & Weight for force advantages ($w_{\mathbf{F}}$ in GRPO mixing) & 1.0 \\
\texttt{reward.energy\_adv\_weight} & Weight for energy advantages ($w_E$ in GRPO mixing) & 0.05 \\
\texttt{reward.energy\_transform\_clip} & Clip transformed energy before negation & 5.0 \\
\texttt{reward.shaping.gamma} & Discount factor for shaping returns & 1.0 \\
\texttt{reward.shaping.scheduler.skip\_prefix} & Steps skipped for shaping schedule ($T - T_{\mathrm{prefix}}$) & 700 \\
\texttt{train.scheduler.name} & LR scheduler type & \texttt{cosine} \\
\texttt{train.scheduler.warmup\_steps} & Scheduler warmup steps & 60 \\
\texttt{train.scheduler.total\_steps} & Scheduler total steps & 1500 \\
\texttt{train.scheduler.min\_lr\_ratio} & Final LR ratio after decay & 0.3 \\

\bottomrule
\end{tabular*}
\end{table}

\begin{table}[H]
\centering
\caption{Hyperparameters for RL post-training (GEOM-Drugs)}
\label{tab:hyperparameter-rl-geom}
\small
\begin{tabular*}{\textwidth}{@{\extracolsep{\fill}}>{\ttfamily}l >{\raggedright\arraybackslash}p{0.5\textwidth} >{\raggedright\arraybackslash}p{0.11\textwidth}}
\toprule
Notation & Description & Value \\
\midrule

\texttt{train.learning\_rate} & Learning rate for policy optimization & $6\times 10^{-7}$ \\
\texttt{train.clip\_range} & PPO clipping threshold & 0.03 \\
\texttt{train.train\_micro\_batch\_size} & Micro-batch size for PPO updates & 16 \\
\texttt{train.gradient\_accumulation\_steps} & Gradient accumulation steps per PPO update & 8 \\
\texttt{train.epoch\_per\_rollout} & PPO optimization epochs per rollout batch & 1 \\
\texttt{train.kl\_penalty\_weight} & KL regularization weight (vs.\ reference policy) & 0.2 \\
\texttt{train.adv\_clip\_max} & Max magnitude for clipped advantages & 2 \\
\texttt{train.max\_grad\_norm} & Gradient norm clipping threshold & 0.5 \\
\texttt{dataloader.sample\_group\_size} & \#groups per iteration ($N_{\mathrm{grp}}$) & 4 \\
\texttt{dataloader.each\_prompt\_sample} & $K$ rollouts per group & 32 \\
\texttt{dataloader.micro\_batch\_size} & Rollout-generation micro-batch size & 32 \\
\texttt{model.time\_step} & Diffusion timesteps used during rollouts & 1000 \\
\texttt{reward.mlff\_model} & MLFF model identifier & \texttt{uma-m-1p1} (\texttt{uma-s-1p1} optional) \\
\texttt{reward.shaping.mlff\_batch\_size} & Batch size for reward evaluation (chunking) & 16 \\
\texttt{reward.force\_aggregation} & Aggregation for force errors & \texttt{rms} (\texttt{max} optional) \\
\texttt{reward.force\_clip\_threshold} & Per-atom force magnitude clip before aggregation & 8.0 \\
\texttt{reward.force\_adv\_weight} & Weight for force advantages ($w_{\mathbf{F}}$ in GRPO mixing) & 0.7 \\
\texttt{reward.energy\_adv\_weight} & Weight for energy advantages ($w_E$ in GRPO mixing) & 0.05 \\
\texttt{reward.energy\_transform\_clip} & Clip transformed energy before negation & 10.0 \\
\texttt{reward.shaping.gamma} & Discount factor for shaping returns  & 1.0 \\
\texttt{reward.shaping.scheduler.skip\_prefix} & Steps skipped for shaping schedule ($T - T_{\mathrm{prefix}}$) & 600 \\
\texttt{train.scheduler.name} & LR scheduler type & \texttt{cosine} \\
\texttt{train.scheduler.warmup\_steps} & Scheduler warmup steps & 60 \\
\texttt{train.scheduler.total\_steps} & Scheduler total steps & 1500 \\
\texttt{train.scheduler.min\_lr\_ratio} & Final LR ratio after decay & 0.3 \\
\bottomrule
\end{tabular*}
\end{table}

% ---------------------------
% Pretraining: QM9
% ---------------------------
\begin{table}[H]
\centering
\caption{Hyperparameters used for the pretrained model on QM9}
\label{tab:hyperparameter-qm9}
\small
\begin{tabular*}{\textwidth}{@{\extracolsep{\fill}}>{\ttfamily}l p{0.5\textwidth} p{0.119\textwidth}}
\toprule
Notation & Description & Value \\
\midrule
\texttt{model} & Dynamics model type & \texttt{egnn\_dynamics} \\
\texttt{probabilistic\_model} & Probabilistic model type & \texttt{diffusion} \\
\texttt{diffusion\_steps} & Number of diffusion steps & 1000 \\
\texttt{diffusion\_noise\_schedule} & Noise schedule used in diffusion & \texttt{polynomial\_2} \\
\texttt{diffusion\_loss\_type} & Loss function for diffusion training & \texttt{l2} \\
\texttt{diffusion\_noise\_precision} & Minimum diffusion noise precision & $10^{-5}$ \\
\texttt{n\_epochs} & Total number of training epochs & 3000 \\
\texttt{batch\_size} & Batch size used for training & 64 \\
\texttt{lr} & Learning rate for the optimizer & $10^{-4}$ \\
\texttt{n\_layers} & Number of EGNN layers & 9 \\
\texttt{inv\_sublayers} & Number of invariant sublayers per EGNN layer & 1 \\
\texttt{nf} & Hidden feature dimension & 256 \\
\texttt{tanh} & Use tanh activation in coordinate MLPs & \texttt{True} \\
\texttt{attention} & Use attention mechanisms in EGNN layers & \texttt{True} \\
\texttt{actnorm} & Use ActNorm & \texttt{True} \\
\texttt{condition\_time} & Condition on diffusion timestep & \texttt{True} \\
\texttt{norm\_constant} & Normalization constant in coordinate updates & 1 \\
\texttt{sin\_embedding} & Use sinusoidal time embeddings & \texttt{False} \\
\texttt{ode\_regularization} & ODE regularization strength & $10^{-3}$ \\
\texttt{dataset} & Dataset & \texttt{qm9} \\
\texttt{filter\_n\_atoms} & Restrict dataset to fixed number of atoms & \texttt{None} \\
\texttt{remove\_h} & Remove hydrogens & \texttt{False} \\
\texttt{dequantization} & Dequantization strategy & \texttt{argmax\_variational} \\
\texttt{ema\_decay} & EMA decay & 0.9999 \\
\texttt{n\_stability\_samples} & Samples used to compute stability during eval & 1000 \\
\texttt{normalize\_factors} & Normalization factors for x/categorical/integer & \texttt{[1,4,10]} \\
\texttt{include\_charges} & Include atom charge feature & \texttt{True} \\
\texttt{normalization\_factor} & Normalize sum aggregation of EGNN & 1 \\
\texttt{aggregation\_method} & Aggregation for the graph network & \texttt{sum} \\
\bottomrule
\end{tabular*}
\end{table}

% ---------------------------
% Pretraining: GEOM-Drugs
% ---------------------------
\begin{table}[t]
\centering
\caption{Hyperparameters used for the pretrained model on GEOM-Drugs}
\label{tab:hyperparameter-geom}
\small
\begin{tabular*}{\textwidth}{@{\extracolsep{\fill}}>{\ttfamily}l p{0.5\textwidth} p{0.115\textwidth}}
\toprule
Notation & Description & Value \\
\midrule
\texttt{model} & Dynamics model type & \texttt{egnn\_dynamics} \\
\texttt{probabilistic\_model} & Probabilistic model type & \texttt{diffusion} \\
\texttt{diffusion\_steps} & Number of diffusion steps & 1000 \\
\texttt{diffusion\_noise\_schedule} & Noise schedule used in diffusion & \texttt{polynomial\_2} \\
\texttt{diffusion\_loss\_type} & Loss function for diffusion training & \texttt{l2} \\
\texttt{diffusion\_noise\_precision} & Minimum diffusion noise precision & $10^{-5}$ \\
\texttt{n\_epochs} & Total number of training epochs & 3000 \\
\texttt{batch\_size} & Batch size used for training & 64 \\
\texttt{lr} & Learning rate for the optimizer & $10^{-4}$ \\
\texttt{n\_layers} & Number of EGNN layers & 4 \\
\texttt{inv\_sublayers} & Number of invariant sublayers per EGNN layer & 1 \\
\texttt{nf} & Hidden feature dimension & 256 \\
\texttt{tanh} & Use tanh activation in coordinate MLPs & \texttt{True} \\
\texttt{attention} & Use attention mechanisms in EGNN layers & \texttt{True} \\
\texttt{actnorm} & Use ActNorm & \texttt{True} \\
\texttt{condition\_time} & Condition on diffusion timestep & \texttt{True} \\
\texttt{norm\_constant} & Normalization constant in coordinate updates & 1 \\
\texttt{sin\_embedding} & Use sinusoidal time embeddings & \texttt{False} \\
\texttt{ode\_regularization} & ODE regularization strength & $10^{-3}$ \\
\texttt{dataset} & Dataset & \texttt{geom} \\
\texttt{filter\_n\_atoms} & Restrict dataset to fixed number of atoms & \texttt{None} \\
\texttt{remove\_h} & Remove hydrogens & \texttt{False} \\
\texttt{dequantization} & Dequantization strategy & \texttt{argmax\_variational} \\
\texttt{ema\_decay} & EMA decay & 0.9999 \\
\texttt{n\_stability\_samples} & Samples used to compute stability during eval & 500 \\
\texttt{normalize\_factors} & Normalization factors for x/categorical/integer & \texttt{[1,4,10]} \\
\texttt{include\_charges} & Include atom charge feature & \texttt{False} \\
\texttt{normalization\_factor} & Normalize sum aggregation of EGNN & 1.0 \\
\texttt{aggregation\_method} & Aggregation for the graph network & \texttt{sum} \\
\bottomrule
\end{tabular*}
\end{table}

\section{Notations}

\begin{table}[H]
\centering
\small
\caption{Summary of symbols and notation used throughout the paper and appendix.}
\begin{tabular}{lll}
\toprule
Symbol & Type & Description \\
\midrule

\multicolumn{3}{l}{\textbf{Geometric states and molecular structure}} \\
$\mathbf{x}_0$ & $\mathbb{R}^{N\times 3}$ & Clean atomic coordinates \\
$\mathbf{x}_t$ & $\mathbb{R}^{N\times 3}$ & Noisy conformation at diffusion time $t$ \\
$N$ & $\mathbb{N}$ & Number of atoms \\
$\mathbf{z}_t$ & $\mathbb{R}^{N \times 3}\times\mathbb{R}^{N \times d_h}$ & Diffusion state (coordinates plus hidden features) \\

\midrule
\multicolumn{3}{l}{\textbf{Diffusion process and generative model}} \\
$p_t(\mathbf{z})$ & distribution & Marginal distribution at time $t$ \\
$p_\theta(\mathbf{z}_{t-1}\mid \mathbf{z}_t)$ & distribution & Reverse diffusion transition \\
$s_\theta(\mathbf{z}_t,t)$ & $\mathbb{R}^{N\times 3}\times\mathbb{R}^{N \times d_h}$ & Score function $\nabla_{\mathbf{z}_t}\log p_t(\mathbf{z}_t)$ \\
$s_\theta^{(\mathbf{x})}(\mathbf{z}_t,t)$ & $\mathbb{R}^{N\times 3}$ & Position component of the score \\
$\boldsymbol{\epsilon}$ & $\mathbb{R}^{N\times 3}$ & Standard Gaussian noise \\
$\alpha_t$ & $\mathbb{R}$ & Signal coefficient \\
$\sigma_t$ & $\mathbb{R}_{>0}$ & Noise scale \\

\midrule
\multicolumn{3}{l}{\textbf{Trajectories and expectations}} \\
$\tau$ & sequence & Reverse diffusion trajectory $(\mathbf{z}_T,\dots,\mathbf{z}_0)$ \\
$\E$ & operator & Expectation over samples or trajectories \\

\midrule
\multicolumn{3}{l}{\textbf{Energy, forces, and physical models}} \\
$E(z)$ & $\mathbb{R}$ & Potential energy of a clean structure $z$ \\
$\mathbf{F}(z)$ & $\mathbb{R}^{N\times 3}$ & Force field $-\nabla_{\mathbf{x}} E(z)$ \\
$E_\star(z)$ & $\mathbb{R}$ & Target energy (gold standard reference) \\
$E_\phi(z)$ & $\mathbb{R}$ & MLFF energy approximation \\
$\mathbf{F}_\star(z)$ & $\mathbb{R}^{N\times 3}$ & Target forces $-\nabla_{\mathbf{x}}E_\star(z)$ \\
$\mathbf{F}_\phi(z)$ & $\mathbb{R}^{N\times 3}$ & MLFF forces $-\nabla_{\mathbf{x}}E_\phi(z)$ \\
$\|\mathbf{F}(z)\|_{\mathrm{rms}}$ & $\mathbb{R}_{\ge 0}$ & RMS force magnitude, $\|\mathbf{F}(z)\|_{\mathrm{rms}}:=\frac{1}{\sqrt{N}}\|\mathbf{F}(z)\|_F$ \\

\midrule
\multicolumn{3}{l}{\textbf{Reinforcement learning and FED GRPO}} \\
$\pi_\theta$ & policy & Reverse diffusion policy \\
$r_t^{E}$ & $\mathbb{R}$ & Energy based reward \\
$r_t^{F}$ & $\mathbb{R}$ & RMS force based stability reward \\
$r_t$ & $\mathbb{R}$ & Total reward \\
$\hat{A}_t$ & $\mathbb{R}$ & Advantage estimate \\
$\mathcal{L}_{\mathrm{GRPO}}$ & objective & Group Relative Policy Optimization loss \\
$\mathcal{L}_{\mathrm{FED\text{-}GRPO}}$ & objective & Force Energy Disentangled GRPO loss \\
$\varepsilon$ & $\mathbb{R}_{>0}$ & PPO clipping threshold (clip range) \\

\midrule
\multicolumn{3}{l}{\textbf{Measure theoretic and variational notation (appendix proofs)}} \\
$\mathcal{P}(\mathcal{X})$ & space & Probability measures on $\mathcal{X}$ \\
$\mu$ & $\mathcal{P}(\mathcal{X})$ & Pretrained terminal law, $\mu:=\rho_{\theta_{\mathrm{pre}}}$ \\
$\rho$ & $\mathcal{P}(\mathcal{X})$ & Candidate terminal law \\
$\rho \ll \mu$ & relation & Absolute continuity of $\rho$ with respect to $\mu$ \\
$\KL(\rho\Vert\mu)$ & $\mathbb{R}_{\ge 0}$ & Kullback Leibler divergence \\
$\mathcal{J}(\rho)$ & functional & Variational objective over terminal laws \\
$w_E$ & $\mathbb{R}_{>0}$ & Energy weight in advantage calculation \\
$w_{\mathrm{KL}}$ & $\mathbb{R}_{>0}$ & KL weight in $\mathcal{J}$ \\
$\beta_{\mathrm{eff}}$ & $\mathbb{R}_{>0}$ & Effective inverse temperature, $\beta_{\mathrm{eff}}:=1/w_{\mathrm{KL}}$ \\
$\rho^\star$ & $\mathcal{P}(\mathcal{X})$ & Maximizer of $\mathcal{J}$ \\
$Z_\phi$ & $\mathbb{R}_{>0}$ & Partition function for $\rho^\star$ under $E_\phi$ \\
$Z_\star$ & $\mathbb{R}_{>0}$ & Partition function for the tilt under $E_\star$ \\
$f$ & density & Radon Nikodym derivative $f:=d\rho/d\mu$ \\
$\lambda$ & $\mathbb{R}$ & Lagrange multiplier for normalization of $f$ \\
$\|\cdot\|_{\mathrm{TV}}$ & $\mathbb{R}_{\ge 0}$ & Total variation distance \\
$\delta$ & $\mathbb{R}_{\ge 0}$ & Uniform energy error bound, $\sup_{z\in\mathcal{X}}|E_\phi(z)-E_\star(z)|\le\delta$ \\

\bottomrule
\end{tabular}

\end{table}

\section{Related Works}

\paragraph{Diffusion Models for Molecules.} 
E(3)-equivariant diffusion models have quickly become a leading paradigm for 3D molecular generation, as they inherently enforce rotational and translational symmetries and produce geometries with high fidelity. ~\citep{hoogeboom2022equivariant} introduced an SE(3)-equivariant diffusion process that generates molecules as sets of atoms with remarkable realism. Building on this foundation, ~\cite{xu2023geometric} developed GeoLDM, a latent 3D diffusion approach that operates in a learned point-cloud latent space composed of both invariant scalars and equivariant tensors, improving sample efficiency and controllable generation of molecular geometries. More recently, ~\cite{cornet2024equivariant} proposed an Equivariant Neural Diffusion (END) model that also learns the forward-noising process, achieving state-of-the-art results on standard molecular generation benchmarks. Beyond unconditional generation, many methods aim to guide or bias diffusion models toward molecules with desired properties or constraints. One strategy is to inject domain knowledge via physically-informed priors. ~\citep{wu2022diffusion} introduces informative prior bridges that steer the diffusion trajectory toward low-energy conformations by constructing a biased intermediate distribution for the sampler. \citep{zhou2025guiding} formulates 3D molecule generation as a Markov decision process and uses proximal policy optimization to fine-tune a pre-trained equivariant diffusion model’s sampling policy by providing a reward based on physics evaluations. An alternative formalism is to cast guided generation as a stochastic optimal control problem. ~\citep{havens2025adjoint} introduces Adjoint Sampling, which optimizes a matching objective for the diffusion’s probability flow by learning an optimal diffusion drift that samples from an unnormalized target density, such as a Boltzmann distribution. ~\cite{liu2025adjoint} generalizes this idea by relaxing previous assumptions on the prior, solving a Schrödinger Bridge between a simple base distribution and the desired Boltzmann-like distribution.

\paragraph{Machine-learning Force-Field.} Machine learning force fields (MLFFs) use geometric deep learning to approximate potential energy surfaces and interatomic forces with high accuracy at far lower cost than quantum chemistry. Early MLFFs relied on invariant message passing networks that predict an energy scalar invariant to rotations and translations, with forces obtained via differentiation. SchNet established this paradigm using continuous filter convolutions over interatomic distances \citep{schutt2018schnet}. More generally, graph network formalisms demonstrated that message passing can unify molecular and crystalline property prediction under a single framework \citep{chen2019graph}. A major line of progress then focused on injecting richer geometric structure into invariant architectures. DimeNet introduced directional message passing through angular features and specialized basis expansions \citep{gasteiger2020directional}, and GemNet improved higher order interaction modeling to better capture many-body effects relevant for energies and forces \citep{gasteiger2021gemnet}. SphereNet further leveraged a spherical coordinate parameterization for local neighborhoods to encode distances and angles in a unified way \citep{liu2022spherical}, while ComENet emphasized complete yet efficient geometric message passing to improve accuracy without sacrificing scaling \citep{wang2022comenet}. Recent analyses underscore that architecture and training data choices critically determine transfer across chemical domains and system sizes, motivating careful study of inductive biases and data coverage \citep{qu2024importance}. In parallel, E(3) equivariant MLFFs enforce symmetry directly in intermediate representations, enabling vector and tensor features that transform correctly under rotations and translations. EGNN provides a lightweight recipe for equivariant message passing with coordinate updates \citep{satorras2021n}, and complete local frame constructions strengthen SE(3) equivariance and stability by anchoring messages in local geometric frames \citep{du2022se}. Tensor-based designs such as TensorNet offer an efficient Cartesian tensor alternative to spherical harmonic pipelines while preserving equivariant structure \citep{simeon2024tensornet}, and newer architectures focus explicitly on practical efficiency, such as GotenNet \citep{aykent2025gotennet}. Equivariant graph networks have also been used as surrogates for more demanding electronic structure components, improving the scalability of learned approximations to Kohn-Sham DFT quantities while maintaining symmetry constraints \citep{lienhancing}. A particularly influential family of MLFFs builds on $\mathrm{SO}(3)$ representation theory with spherical harmonics. SE(3) Transformer introduced equivariant attention with irreducible representations \citep{fuchs2020se}, and Equiformer extended this transformer style approach with strong results on quantum chemistry benchmarks \citep{liao2023equiformer}. NequIP demonstrated striking data efficiency for learning interatomic potentials with spherical tensor message passing \citep{batzner2022e3}, while MACE advanced higher order equivariant interactions for accurate molecular dynamics potentials \citep{batatia2022mace}. Scalability has been further improved with locally deployed equivariant potentials such as Allegro \citep{musaelian2023learning}, extensions that model both long range and short range interactions \citep{li2023longshortrange}, and methods that reduce the cost of equivariant operations \citep{passaro2023reducing,li2025e2former}. Finally, recent work targets improved physical consistency and broader transfer, including learning objectives tuned for stable molecular dynamics and phonon properties \citep{fulearning} and universal pretrained atomistic models that aim to generalize across elements and compounds \citep{wood2025family}.

\paragraph{Reinforcement Learning for Diffusion Guidance.}Integrating RL with diffusion models is a recent trend aimed at steering generative models toward complex objectives that are hard to encode in the training likelihood. One representative approach is Denoising Diffusion Policy Optimization (DDPO)~\cite{Black2023DDPO}. DDPO reframes the multi-step denoising process as a Markov decision process, where each diffusion step is an action, and a reward is obtained at the final sample. This allows applying policy gradient algorithms to fine-tune a pretrained diffusion model for higher rewards. ~\cite{xue2025dancegrpo} adapts GRPO to diffusion and continuous flow models with \textit{DanceGRPO}, achieving robust RL-based fine-tuning on large-scale text-to-image and text-to-video generation tasks. ~\cite{liu2025flowgrpo} introduces \textit{Flow-GRPO}, an online RL algorithm that trains flow matching models using reward signals by aligning generated trajectories with desired outcomes through policy gradient updates. ~\cite{shenchemistry} proposes a chemistry-inspired guided diffusion that uses external force-field evaluations as a form of reward to bias the diffusion sampler toward low-energy structures, without requiring those evaluations to be differentiable. ~\cite{zhou2025guiding} applies Reinforcement Learning with Physical Feedback (RLPF) to an equivariant diffusion model, significantly improving stability of generated conformations. Crucially, both works demonstrate that RL can incorporate domain-specific criteria (like force-field energies) that lie outside the scope of the original generative model’s training. Another perspective comes from control theory: \emph{Adjoint methods} avoid explicit RL by solving optimal control formulations for diffusion. ~\cite{havens2025adjoint} derives an adjoint formulation to directly match a diffusion sampler to an optimal policy in a single backward pass, while ~\cite{liu2025adjoint} develops an adjoint Schrödinger bridge that computes an optimal transport between the prior and target distributions. These methods achieve goals similar to RL-guided diffusion by biasing generation toward certain distributions or rewards, but do so by analytically aligning sampling trajectories, thereby improving efficiency and scalability.

\section{Proofs for Section~\ref{sec:alignment_theory}} % adjust label

\begin{theorem}[Energy-aligned terminal distribution (restated)]
\label{thm:gibbs_tilt_app}
Assume $w_{\mathrm{KL}}>0$ and consider the functional over $\rho\in\mathcal{P}(\mathcal{X})$,
\[
\mathcal{J}(\rho)
:= \E_{z\sim\rho}\!\big[-E_\phi(z)\big]
- w_{\mathrm{KL}}\,\KL(\rho\Vert \mu),
\qquad
\text{with the convention } \mathcal{J}(\rho)=-\infty \text{ if } \rho\not\ll\mu.
\]
Assume the supremum of $\mathcal{J}$ over $\mathcal{P}(\mathcal{X})$ is attained.
Then the maximizer is unique and equals
\[
\rho^\star(dz)
= \frac{1}{Z_\phi}\exp\!\big(-\beta_{\mathrm{eff}}E_\phi(z)\big)\,\mu(dz),
\qquad
\beta_{\mathrm{eff}} := \frac{1}{w_{\mathrm{KL}}},
\qquad
Z_\phi := \int_{\mathcal{X}}\exp\!\big(-\beta_{\mathrm{eff}}E_\phi(z)\big)\,\mu(dz).
\]
\end{theorem}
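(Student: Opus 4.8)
The plan is to avoid calculus of variations and prove the result through the \emph{Gibbs variational identity}, which rewrites $\mathcal{J}$ as an affine function of a single Kullback--Leibler divergence measured against the candidate maximizer. Define $\rho^\star$ by the claimed formula; then $\rho^\star\ll\mu$ with Radon--Nikodym derivative $d\rho^\star/d\mu = Z_\phi^{-1}\exp(-\beta_{\mathrm{eff}}E_\phi)$, so $\rho^\star$ is a genuine probability measure as soon as $Z_\phi\in(0,\infty)$. First I would reduce to competitors $\rho$ with $\rho\ll\mu$ and $\KL(\rho\Vert\mu)<\infty$: every other $\rho$ has $\mathcal{J}(\rho)=-\infty$ by the stated convention, hence cannot beat $\rho^\star$, whose objective value is finite. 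For any admissible $\rho$ one also has $\rho\ll\rho^\star$, with $d\rho/d\rho^\star = (d\rho/d\mu)\,Z_\phi\exp(\beta_{\mathrm{eff}}E_\phi)$.

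The central algebraic step is the expansion
\[
\KL(\rho\Vert\rho^\star)
= \E_{z\sim\rho}\!\Big[\log\tfrac{d\rho}{d\mu}\Big]
+ \beta_{\mathrm{eff}}\,\E_{z\sim\rho}[E_\phi(z)]
+ \log Z_\phi
= \KL(\rho\Vert\mu) + \beta_{\mathrm{eff}}\,\E_{z\sim\rho}[E_\phi(z)] + \log Z_\phi .
\]
Multiplying by $w_{\mathrm{KL}}=1/\beta_{\mathrm{eff}}>0$ and rearranging gives the exact identity
\[
\mathcal{J}(\rho)
= \E_{z\sim\rho}[-E_\phi(z)] - w_{\mathrm{KL}}\,\KL(\rho\Vert\mu)
= w_{\mathrm{KL}}\log Z_\phi - w_{\mathrm{KL}}\,\KL(\rho\Vert\rho^\star).
\]
Since $w_{\mathrm{KL}}>0$ and $\KL(\rho\Vert\rho^\star)\ge 0$ with equality if and only if $\rho=\rho^\star$ (Gibbs' inequality, i.e. strict convexity of $u\mapsto u\log u$), every admissible $\rho$ satisfies $\mathcal{J}(\rho)\le w_{\mathrm{KL}}\log Z_\phi$, attained exactly at $\rho=\rho^\star$. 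This yields at once the value of the supremum, the uniqueness of the maximizer, and its closed form; the hypothesis that the supremum is attained is then only needed to exclude the degenerate case $Z_\phi=\infty$ (equivalently, to ensure a competitor with finite objective exists).

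The main obstacle is the measure-theoretic bookkeeping that makes the expansion of $\KL(\rho\Vert\rho^\star)$ rigorous: the cross term $\E_\rho[E_\phi]$ must be finite and the identity $\KL(\rho\Vert\rho^\star)=\KL(\rho\Vert\mu)+\beta_{\mathrm{eff}}\E_\rho[E_\phi]+\log Z_\phi$ must be an unambiguous sum of finite terms rather than an $\infty-\infty$. I would dispatch this by assuming (standard for a neural-network energy on the configuration space $\mathcal{X}$) that $E_\phi$ is bounded below, so $\E_\rho[-E_\phi]\in[-\infty,+\infty)$ is always well-defined; then any $\rho$ with $\mathcal{J}(\rho)>-\infty$ automatically has $\KL(\rho\Vert\mu)<\infty$ and $\E_\rho[|E_\phi|]<\infty$, the expansion holds termwise, and for all remaining $\rho$ the inequality $\mathcal{J}(\rho)=-\infty\le\mathcal{J}(\rho^\star)$ is trivial. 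As a complementary derivation I would also record the Lagrangian route sketched after the statement: write $\rho=f\mu$, enforce $\int f\,d\mu=1$ with multiplier $\lambda$, set the first variation $-E_\phi-w_{\mathrm{KL}}(\log f+1)+\lambda=0$ to obtain $f\propto e^{-\beta_{\mathrm{eff}}E_\phi}$, and appeal to strict concavity of $\rho\mapsto\mathcal{J}(\rho)$ (inherited from the strictly convex negative-entropy term) for uniqueness---but the Gibbs-identity argument is preferable, since it never differentiates through the simplex constraint and delivers the optimal value as a byproduct.
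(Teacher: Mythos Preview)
Your proof is correct and takes a genuinely different route from the paper. The paper argues by calculus of variations: it writes $\rho=f\mu$, introduces a Lagrange multiplier for $\int f\,d\mu=1$, computes the first variation along perturbations $f_\varepsilon=f+\varepsilon h$, solves the resulting pointwise stationarity condition $-E_\phi - w_{\mathrm{KL}}(1+\log f^\star)+\lambda=0$ for $f^\star$, and then cites strict convexity of $f\mapsto\int f\log f\,d\mu$ for uniqueness---exactly the Lagrangian derivation you record only as a ``complementary'' route. You instead verify optimality directly via the Gibbs (Donsker--Varadhan) identity: defining $\rho^\star$ up front and expanding $\KL(\rho\Vert\rho^\star)$ yields $\mathcal{J}(\rho)=w_{\mathrm{KL}}\log Z_\phi - w_{\mathrm{KL}}\KL(\rho\Vert\rho^\star)$, from which optimality, uniqueness, and the optimal value all follow at once from nonnegativity and strictness of KL. Your approach avoids differentiating through the simplex constraint and sidesteps the question of whether a stationary point is a global maximum; it is also more careful about integrability (the paper leaves the potential $\infty-\infty$ issue implicit). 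The only cost is the mild auxiliary hypothesis that $E_\phi$ be bounded below, which the paper's formal computation does not invoke.
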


\begin{proof}
Let $\rho\in\mathcal{P}(\mathcal{X})$. If $\rho\not\ll\mu$, then $\KL(\rho\Vert\mu)=+\infty$
and $\mathcal{J}(\rho)=-\infty$, so we restrict to $\rho\ll\mu$.
Write the Radon--Nikodym derivative $f := d\rho/d\mu$. Then $f\ge 0$ $\mu$-a.e. and
$\int f\,d\mu=1$. Moreover,
\[
\KL(\rho\Vert\mu)=\int_{\mathcal{X}} f(z)\log f(z)\,\mu(dz),
\qquad
\E_{z\sim\rho}[E_\phi(z)] = \int_{\mathcal{X}} E_\phi(z)\,f(z)\,\mu(dz).
\]
Thus maximizing $\mathcal{J}(\rho)$ over $\rho\ll\mu$ is equivalent to maximizing over densities $f$:
\[
\mathcal{J}(f)
= -\int_{\mathcal{X}} E_\phi(z)\,f(z)\,\mu(dz)
- w_{\mathrm{KL}}\int_{\mathcal{X}} f(z)\log f(z)\,\mu(dz),
\quad
\text{s.t. } \int_{\mathcal{X}} f(z)\,\mu(dz)=1,\; f\ge 0.
\]
Introduce a Lagrange multiplier $\lambda\in\mathbb{R}$ for the normalization constraint and define
\[
\mathcal{L}(f,\lambda)
= -\int E_\phi(z)f(z)\,\mu(dz)
- w_{\mathrm{KL}}\int f(z)\log f(z)\,\mu(dz)
+ \lambda\Big(\int f(z)\,\mu(dz)-1\Big).
\]
Let $h$ be any bounded perturbation with $\int h\,d\mu=0$ and consider $f_\varepsilon=f+\varepsilon h$.
A first-order stationarity condition at an optimum $f^\star$ implies
$\frac{d}{d\varepsilon}\mathcal{L}(f^\star+\varepsilon h,\lambda)\big|_{\varepsilon=0}=0$ for all such $h$.
Using
$\frac{d}{d\varepsilon}\big[(f^\star+\varepsilon h)\log(f^\star+\varepsilon h)\big]\big|_{\varepsilon=0}
= h(1+\log f^\star)$,
we obtain
\[
0=\int_{\mathcal{X}} h(z)\Big(- E_\phi(z) - w_{\mathrm{KL}}(1+\log f^\star(z)) + \lambda\Big)\mu(dz).
\]
Since this holds for all $h$ with zero $\mu$-mean, the bracketed term must be $\mu$-a.e. constant, i.e.
\[
- E_\phi(z) - w_{\mathrm{KL}}(1+\log f^\star(z)) + \lambda = 0
\quad \text{for $\mu$-a.e. } z.
\]
Rearranging gives
\[
\log f^\star(z) = -\frac{1}{w_{\mathrm{KL}}}E_\phi(z) + c
= -\beta_{\mathrm{eff}}E_\phi(z) + c
\]
for some constant $c\in\mathbb{R}$, hence
\[
f^\star(z)=\exp(c)\exp\!\big(-\beta_{\mathrm{eff}}E_\phi(z)\big).
\]
Imposing $\int f^\star\,d\mu=1$ yields $\exp(c)=1/Z_\phi$ with
\[
Z_\phi=\int_{\mathcal{X}} \exp\!\big(-\beta_{\mathrm{eff}}E_\phi(z)\big)\,\mu(dz),
\]
so $\rho^\star(dz)=f^\star(z)\mu(dz)$ as claimed.

Uniqueness follows because $f\mapsto \int f\log f\,d\mu$ is strictly convex on
$\{f\ge 0:\int f\,d\mu=1\}$, hence $\mathcal{J}(f)$ is strictly concave when $w_{\mathrm{KL}}>0$.
\end{proof}

\paragraph{MLFF approximation and distribution error.}
Let $E_\star:\mathcal{X}\to\mathbb{R}$ be a target energy and $E_\phi$ its MLFF approximation.
Define the two tilted laws (with the same reference $\mu$):
\[
\rho^\star_\star(dz) = \frac{1}{Z_\star}\exp\!\big(-\beta_{\mathrm{eff}}E_\star(z)\big)\,\mu(dz),
\qquad
\rho^\star_\phi(dz) = \frac{1}{Z_\phi}\exp\!\big(-\beta_{\mathrm{eff}}E_\phi(z)\big)\,\mu(dz),
\]
where
\[
Z_\star := \int_{\mathcal{X}}\exp\!\big(-\beta_{\mathrm{eff}}E_\star(z)\big)\,\mu(dz),
\qquad
Z_\phi := \int_{\mathcal{X}}\exp\!\big(-\beta_{\mathrm{eff}}E_\phi(z)\big)\,\mu(dz).
\]

\begin{theorem}[Energy error implies distribution error (restated)]
\label{thm:energy_to_dist_app}
Assume $\sup_{z\in\mathcal{X}}|E_\phi(z)-E_\star(z)|\le \delta$ for some $\delta\ge 0$.
Then
\[
\|\rho^\star_\phi-\rho^\star_\star\|_{\mathrm{TV}}
\le \tanh\!\big(\beta_{\mathrm{eff}}\delta\big).
\]
\end{theorem}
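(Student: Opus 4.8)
The plan is to reduce the problem to a one-dimensional estimate on the likelihood ratio between the two tilted laws. First I would set $\epsilon := \beta_{\mathrm{eff}}\delta$ and introduce the pointwise log-ratio $g(z) := \beta_{\mathrm{eff}}\bigl(E_\star(z)-E_\phi(z)\bigr)$, so that $|g(z)|\le\epsilon$ for every $z\in\mathcal{X}$ by hypothesis. Since $\rho^\star_\phi$ and $\rho^\star_\star$ share the reference measure $\mu$ and have strictly positive densities, they are mutually absolutely continuous, and a direct computation gives the likelihood ratio
\[
L(z)\;:=\;\frac{d\rho^\star_\star}{d\rho^\star_\phi}(z)\;=\;\frac{Z_\phi}{Z_\star}\,e^{-g(z)}.
\]

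Next I would squeeze $L$ into a fixed interval. Writing $e^{-\beta_{\mathrm{eff}}E_\star}=e^{-\beta_{\mathrm{eff}}E_\phi}e^{-g}$ and integrating against $\mu$ gives $Z_\star=Z_\phi\,\E_{\rho^\star_\phi}[e^{-g}]$, and the pointwise bound $e^{-\epsilon}\le e^{-g}\le e^{\epsilon}$ yields $e^{-\epsilon}\le Z_\star/Z_\phi\le e^{\epsilon}$ (this also shows $Z_\star<\infty$ whenever $Z_\phi<\infty$). Multiplying this with the pointwise bound on $e^{-g}$ in the formula for $L$, we get $L(z)\in[e^{-2\epsilon},\,e^{2\epsilon}]$ for all $z$, while by construction $\E_{\rho^\star_\phi}[L]=\rho^\star_\star(\mathcal{X})=1$.

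The last step is an elementary mean-absolute-deviation bound for a bounded random variable with known mean. Using $\|\rho^\star_\phi-\rho^\star_\star\|_{\mathrm{TV}}=\tfrac12\,\E_{\rho^\star_\phi}\bigl[\lvert L-1\rvert\bigr]$, I would invoke the convexity estimate $(y-m)^{+}\le\frac{(b-m)(y-a)}{b-a}$, valid for $y\in[a,b]$ and $m\in[a,b]$ (it is the chord of the convex function $y\mapsto(y-m)^{+}$ through its endpoint values), which after taking expectations and using $\E(Y-m)^{+}=\E(m-Y)^{+}$ gives $\E\lvert Y-m\rvert\le\frac{2(b-m)(m-a)}{b-a}$ for any $Y\in[a,b]$ with $\E[Y]=m$. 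Applying this with $Y=L$, $a=e^{-2\epsilon}$, $b=e^{2\epsilon}$, $m=1$, and simplifying with $(b-1)(1-a)=e^{2\epsilon}+e^{-2\epsilon}-2=2(\cosh 2\epsilon-1)$, $b-a=2\sinh 2\epsilon$, $\cosh 2\epsilon-1=2\sinh^2\epsilon$, $\sinh 2\epsilon=2\sinh\epsilon\cosh\epsilon$, yields $\E_{\rho^\star_\phi}\lvert L-1\rvert\le 2\tanh\epsilon$, hence $\|\rho^\star_\phi-\rho^\star_\star\|_{\mathrm{TV}}\le\tanh\epsilon=\tanh(\beta_{\mathrm{eff}}\delta)$.

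I do not expect a genuine obstacle; the one place to be careful is the squeezing step, where one must bound the partition-function ratio $Z_\phi/Z_\star$ and the pointwise tilt $e^{-g}$ separately and then multiply, rather than trying to exponentiate $|E_\phi-E_\star|$ all at once. A minor subtlety is integrability of the tilted laws, handled by the remark $Z_\star\le e^{\epsilon}Z_\phi$. (One can in fact sharpen the constant to $\tanh(\beta_{\mathrm{eff}}\delta/2)$ by centering the mean-absolute-deviation bound at the geometric mean $\sqrt{ab}=1$ instead of using the cruder enclosure $[e^{-2\epsilon},e^{2\epsilon}]$, but the stated bound is all we need.)
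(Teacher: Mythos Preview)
Your proof is correct and follows the same two-step skeleton as the paper: first squeeze the likelihood ratio $L=d\rho^\star_\star/d\rho^\star_\phi$ into $[e^{-2\epsilon},e^{2\epsilon}]$ by separately bounding the pointwise tilt $e^{-g}$ and the partition-function ratio $Z_\phi/Z_\star$, then convert this into a TV bound. The one genuine difference is in how the supporting lemma (bounded likelihood ratio $\Rightarrow$ TV bound) is established. The paper writes $\|P-Q\|_{\mathrm{TV}}=\int_{\{r\ge1\}}(r-1)\,dQ$, parametrizes by $q=Q(r\ge1)$ and conditional means $m_\pm$, and solves the resulting constrained maximization, which is attained at the two-point extremal distribution. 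You instead use the chord inequality $(y-m)^+\le\frac{(b-m)(y-a)}{b-a}$ for the convex map $y\mapsto(y-m)^+$, take expectations, and read off the MAD bound directly. Both routes yield exactly $\E|L-1|\le\frac{2(b-1)(1-a)}{b-a}$; your convexity argument is a bit more streamlined since it avoids setting up and solving an optimization, while the paper's version makes the extremal two-point distribution explicit. Your closing remark that the constant sharpens to $\tanh(\beta_{\mathrm{eff}}\delta/2)$ is also correct (and not noted in the paper): because $L=c\,e^{-g}$ with a \emph{single} constant $c$, the ratio cannot simultaneously attain both endpoints $e^{\pm2\epsilon}$, and repeating the MAD bound on $h=e^{-g}\in[e^{-\epsilon},e^{\epsilon}]$ instead of on $L$ gives the tighter constant.
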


\begin{lemma}[Likelihood-ratio bound implies TV bound]
\label{lem:lr_to_tv_app}
Let $P,Q$ be probability measures with $P\ll Q$ and suppose there exists $\varepsilon\ge 0$ such that
\[
e^{-\varepsilon} \le \frac{dP}{dQ} \le e^{\varepsilon}
\quad Q\text{-a.e.}
\]
Then $\|P-Q\|_{\mathrm{TV}}\le \tanh(\varepsilon/2)$.
\end{lemma}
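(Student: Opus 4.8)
The plan is to reduce the statement to an elementary pointwise inequality for the likelihood ratio and then integrate. Write $f := dP/dQ$, which is well defined since $P\ll Q$ and satisfies $f\ge 0$, $\int_{\mathcal{X}} f\,dQ = 1$, and by hypothesis $e^{-\varepsilon}\le f\le e^{\varepsilon}$ $Q$-a.e. With the standard identity $\|P-Q\|_{\mathrm{TV}} = \tfrac12\int_{\mathcal{X}}|f-1|\,dQ$ (Scheffé, valid because $Q$ dominates $P$), it suffices to bound $\int_{\mathcal{X}}|f-1|\,dQ$ by $2\tanh(\varepsilon/2)$.

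The key step is the claim that, with $t := \tanh(\varepsilon/2)$, one has $|x-1|\le t\,(x+1)$ for every $x\in[e^{-\varepsilon},e^{\varepsilon}]$. Using $t = \tfrac{e^{\varepsilon}-1}{e^{\varepsilon}+1}$, equivalently the identities $\tfrac{1+t}{1-t} = e^{\varepsilon}$ and $\tfrac{1-t}{1+t} = e^{-\varepsilon}$, this reduces to two monotone checks: for $x\ge 1$, $\;x-1\le t(x+1)\iff x\le \tfrac{1+t}{1-t} = e^{\varepsilon}$; for $x\le 1$, $\;1-x\le t(x+1)\iff x\ge \tfrac{1-t}{1+t} = e^{-\varepsilon}$. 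Both hold on the stated interval, with equality at the endpoints — which is also the reason the constant is sharp. Applying this with $x = f(z)$ and integrating against $Q$ gives $\int_{\mathcal{X}}|f-1|\,dQ\le t\int_{\mathcal{X}}(f+1)\,dQ = 2t$, hence $\|P-Q\|_{\mathrm{TV}}\le t = \tanh(\varepsilon/2)$.

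An equivalent route, for readers who prefer not to guess the envelope $t(x+1)$: set $A := \{f\ge 1\}$, so that $\|P-Q\|_{\mathrm{TV}} = P(A)-Q(A) = \int_A (f-1)\,dQ = \int_{A^c}(1-f)\,dQ$, the last equality because $\int_{\mathcal{X}}(f-1)\,dQ = 0$. Bounding $f-1\le e^{\varepsilon}-1$ on $A$ and $1-f\le 1-e^{-\varepsilon}$ on $A^c$, and writing $a := Q(A)$, yields $\|P-Q\|_{\mathrm{TV}}\le \min\{a(e^{\varepsilon}-1),\,(1-a)(1-e^{-\varepsilon})\}$; maximizing over $a\in[0,1]$ (the two linear bounds cross at $a^{\star} = \tfrac{1-e^{-\varepsilon}}{e^{\varepsilon}-e^{-\varepsilon}}$) gives the worst-case value $\tfrac{\cosh\varepsilon-1}{\sinh\varepsilon} = \tanh(\varepsilon/2)$ via $\cosh\varepsilon-1 = 2\sinh^2(\varepsilon/2)$ and $\sinh\varepsilon = 2\sinh(\varepsilon/2)\cosh(\varepsilon/2)$.

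The measure-theoretic content is minimal — only the Scheffé representation of total variation and $\int(f-1)\,dQ=0$. The only mild obstacle in either route is bookkeeping the hyperbolic identities (respectively, carrying out the endpoint optimization and simplifying it to a $\tanh$), so I would state those half-angle identities up front. Finally, I note that this lemma yields Theorem~\ref{thm:energy_to_dist_app} immediately: under $\sup_{z}|E_\phi(z)-E_\star(z)|\le\delta$ one has $Z_\star/Z_\phi\in[e^{-\beta_{\mathrm{eff}}\delta},e^{\beta_{\mathrm{eff}}\delta}]$, hence $\tfrac{d\rho^\star_\phi}{d\rho^\star_\star} = \tfrac{Z_\star}{Z_\phi}\,e^{-\beta_{\mathrm{eff}}(E_\phi-E_\star)}\in[e^{-2\beta_{\mathrm{eff}}\delta},e^{2\beta_{\mathrm{eff}}\delta}]$, so the lemma applies with $\varepsilon = 2\beta_{\mathrm{eff}}\delta$ and $\tanh(\varepsilon/2) = \tanh(\beta_{\mathrm{eff}}\delta)$.
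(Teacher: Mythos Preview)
Your proof is correct. Your main route --- the pointwise envelope $|x-1|\le \tanh(\varepsilon/2)\,(x+1)$ on $[e^{-\varepsilon},e^{\varepsilon}]$, then integrate --- is genuinely different from the paper's argument and somewhat cleaner. The paper sets $A=\{r\ge 1\}$, parameterizes by $q=Q(A)$ and the conditional means $m_+=\E_Q[r\mid A]$, $m_-=\E_Q[r\mid A^c]$, and solves the constrained optimization $\max q(m_+-1)$ subject to $qm_+ + (1-q)m_-=1$ with $m_+\in[1,e^\varepsilon]$, $m_-\in[e^{-\varepsilon},1]$; the optimum pushes the conditional means to the endpoints and forces $q=1/(e^\varepsilon+1)$. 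Your envelope bypasses any optimization, and your alternate route is a slimmer cousin of the paper's: pointwise bounds on $f-1$ rather than conditional-mean bounds, leaving only a one-dimensional $\max_a\min\{\cdot\}$. What the paper's parameterization buys is an explicit description of the extremizers (two-point likelihood ratios), making sharpness transparent; your approach buys brevity, with sharpness visible from the equality cases of the envelope. Your concluding remark on deducing Theorem~\ref{thm:energy_to_dist_app} with $\varepsilon=2\beta_{\mathrm{eff}}\delta$ matches the paper exactly.
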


\begin{proof}
Let $r:=dP/dQ$, so $r\in[e^{-\varepsilon},e^{\varepsilon}]$ $Q$-a.e. and $\E_Q[r]=1$.
Let $A:=\{r\ge 1\}$. Then
\[
\|P-Q\|_{\mathrm{TV}}
=\frac12\int |dP-dQ|
=\frac12\int |r-1|\,dQ
=\int_A (r-1)\,dQ,
\]
where the last equality uses $\int_A(r-1)\,dQ=\int_{A^c}(1-r)\,dQ$ from $\E_Q[r]=1$.

Write $q:=Q(A)$ and the conditional means
$m_+:=\E_Q[r\mid A]\in[1,e^{\varepsilon}]$,
$m_-:=\E_Q[r\mid A^c]\in[e^{-\varepsilon},1]$.
The constraint $\E_Q[r]=1$ becomes $q m_+ + (1-q)m_-=1$, and the TV becomes $q(m_+-1)$.
The maximum TV under the bounds is achieved at the extreme values $m_+=e^{\varepsilon}$ and $m_-=e^{-\varepsilon}$.
Solving
\[
q e^{\varepsilon} + (1-q)e^{-\varepsilon} = 1
\quad\Longrightarrow\quad
q=\frac{1}{e^{\varepsilon}+1},
\]
we get
\[
\|P-Q\|_{\mathrm{TV}}
\le q(e^{\varepsilon}-1)
=\frac{e^{\varepsilon}-1}{e^{\varepsilon}+1}
=\tanh(\varepsilon/2).
\]
\end{proof}

\begin{proof}[Proof of Theorem~\ref{thm:energy_to_dist_app}]
Let $\beta:=\beta_{\mathrm{eff}}$.
Define $w_\star(z):=\exp(-\beta E_\star(z))$ and $w_\phi(z):=\exp(-\beta E_\phi(z))$.
By $|E_\phi(z)-E_\star(z)|\le \delta$, for all $z\in\mathcal{X}$,
\[
e^{-\beta\delta}w_\star(z)\le w_\phi(z)\le e^{\beta\delta}w_\star(z).
\]
Integrating w.r.t.\ $\mu$ yields the partition-function sandwich:
\[
e^{-\beta\delta}Z_\star \le Z_\phi \le e^{\beta\delta}Z_\star.
\]
Therefore, for $\rho^\star_\star$-a.e.\ $z$,
\[
\frac{d\rho^\star_\phi}{d\rho^\star_\star}(z)
= \frac{Z_\star}{Z_\phi}\cdot\frac{w_\phi(z)}{w_\star(z)}
\in [e^{-\beta\delta},e^{\beta\delta}]\cdot[e^{-\beta\delta},e^{\beta\delta}]
= [e^{-2\beta\delta},e^{2\beta\delta}].
\]
Applying Lemma~\ref{lem:lr_to_tv_app} with $P=\rho^\star_\phi$, $Q=\rho^\star_\star$ and $\varepsilon=2\beta\delta$
gives
\[
\|\rho^\star_\phi-\rho^\star_\star\|_{\mathrm{TV}}
\le \tanh\!\big((2\beta\delta)/2\big)
=\tanh(\beta\delta)
=\tanh(\beta_{\mathrm{eff}}\delta).
\]
\end{proof}
% Appendix: Alchemical force analysis

\section{Supplementary Experiments}
\subsection{Qualitative analysis.}
We provide qualitative examples on GEOM-Drugs to illustrate how post-training affects both 3D geometry and validity.

\paragraph{Figure~\ref{fig:conformer_overlay}: Geometry overlays.}
We overlay generated conformers (colored) with the corresponding reference structure (gray). Compared to the baseline diffusion model, the post-trained sampler more consistently produces coherent, well-connected geometries that better match the reference conformer.

\paragraph{Figures~\ref{fig:conformer_quality_1}--\ref{fig:conformer_quality_2}: Energy/force annotations.}
We annotate representative examples with MLFF-predicted energy and RMS force norms. Post-training typically shifts samples toward lower energies and smaller residual forces, consistent with configurations closer to local equilibrium.

\paragraph{Figure~\ref{fig:diffusion_trajectory}: Matched-noise denoising trajectories.}
Using the same initial noise, we compare reverse diffusion trajectories and show that post-training corrects failure modes that persist throughout denoising in the baseline (e.g., distorted rings or strained geometries), yielding a chemically valid final structure at $t=0$.

\paragraph{Figure~\ref{fig:invalid_to_valid}: Invalid-to-valid corrections.}
We highlight cases where the baseline produces chemically invalid structures, while the post-trained model generates valid conformers for the same molecular graphs.

Together, these figures show that alignment improves terminal quality and also stabilizes intermediate geometries along the denoising trajectory.
% Figure 1: Without energy/force metrics
\begin{figure}[htbp]
    \centering
    \includegraphics[width=\textwidth]{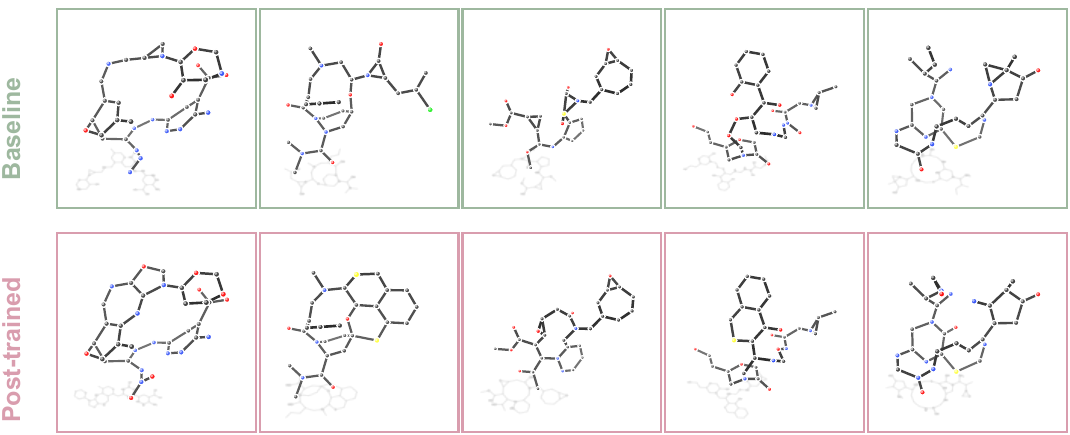}
    \caption{Geometry overlays on GEOM-Drugs. Top row: baseline diffusion model (green). Bottom row: post-trained model (pink). In each panel, the generated conformer (dark) is overlaid on the reference structure (light gray). Post-training improves connectivity and overall geometric fidelity to the reference conformer.}
    \label{fig:conformer_overlay}
\end{figure}

% Figure 2: With energy/force metrics
\begin{figure}[htbp]
    \centering
    \includegraphics[width=\textwidth]{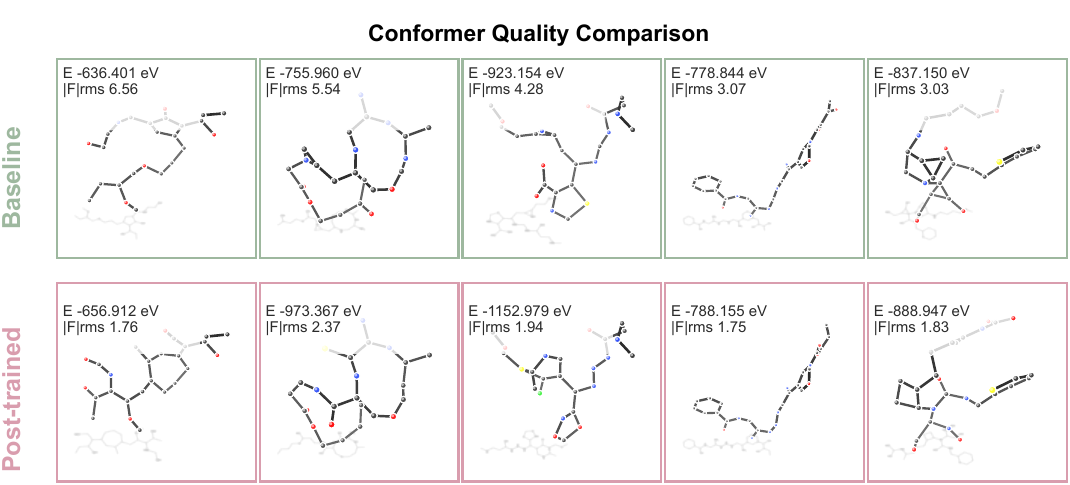}
    \caption{Conformer quality with MLFF metrics. Top row: baseline model. Bottom row: post-trained model. We report MLFF-predicted energy ($E$) and RMS force norms; post-training shifts samples toward lower $E$ and smaller RMS forces. Generated conformers (dark) are overlaid on reference structures (light gray).}
    \label{fig:conformer_quality_1}
\end{figure}

\begin{figure}[htbp]
    \centering
    \includegraphics[width=\textwidth]{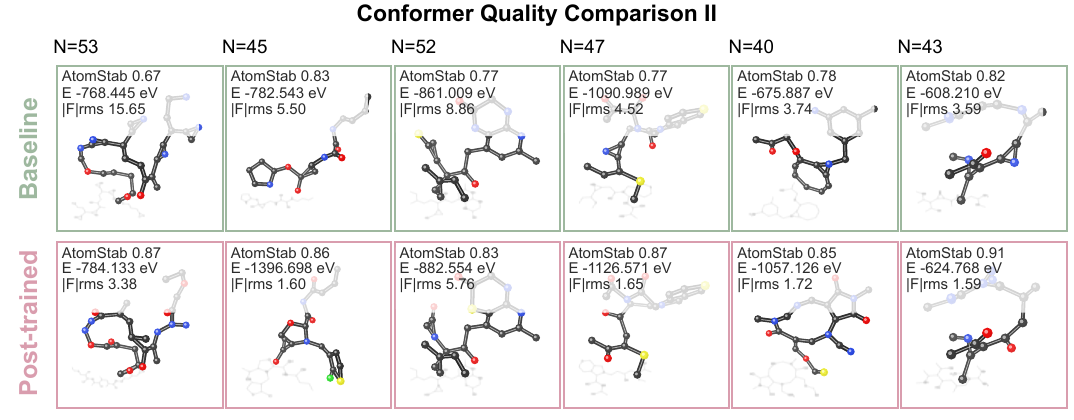}
    \caption{Additional conformer-quality examples with MLFF metrics. Top row: baseline model. Bottom row: post-trained model. Post-training reduces MLFF-predicted energy ($E$) and RMS force norms across examples. Generated conformers (dark) are overlaid on reference structures (light gray).}
    \label{fig:conformer_quality_2}
\end{figure}

\begin{figure}[htbp]
    \centering
    \includegraphics[width=\textwidth]{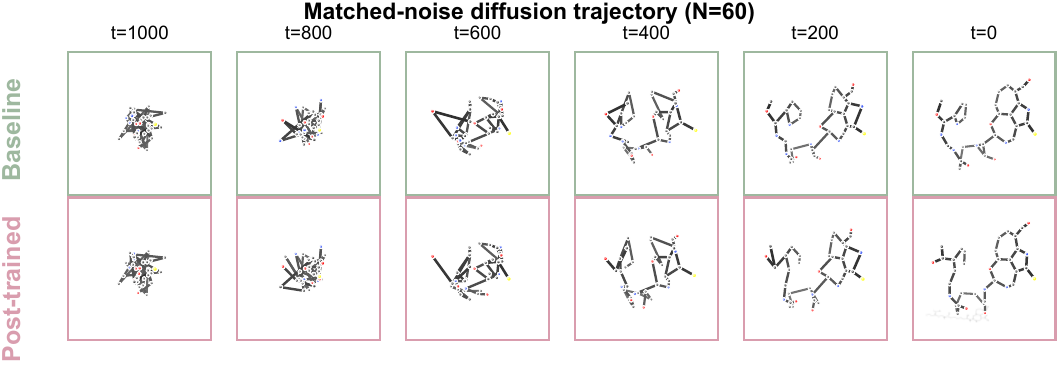}
    \caption{Matched-noise reverse diffusion trajectories on GEOM-Drugs (same initial noise; $N=60$ atoms, seed=89). Top row: baseline (green). Bottom row: post-trained (pink). Snapshots are shown at timesteps $t \in \{1000, 800, 600, 400, 200, 0\}$. The baseline trajectory retains distorted rings and strained geometries and ends in an invalid structure at $t=0$, whereas the post-trained model corrects these failures and produces a valid conformer.}
    \label{fig:diffusion_trajectory}
\end{figure}

\begin{figure}[t]
    \centering
    \includegraphics[width=\textwidth]{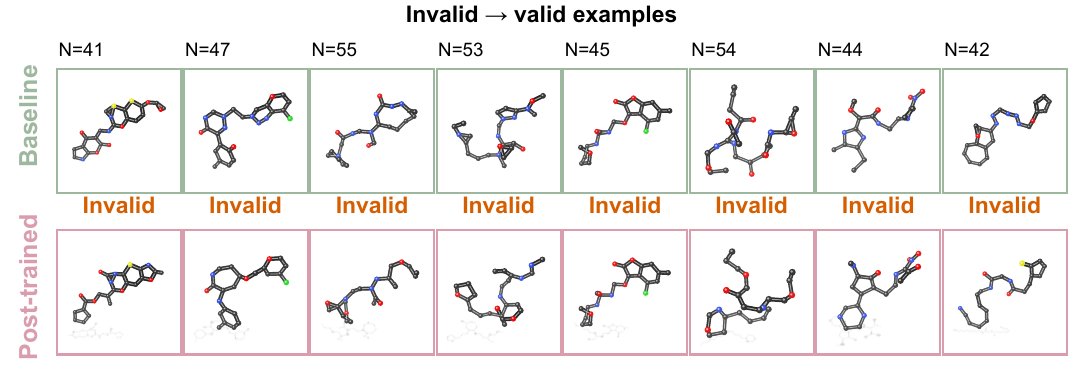}
    \caption{\textbf{Invalid-to-valid corrections on GEOM-Drugs.} Each column shows the same molecule ($N$ atoms) generated by the baseline model (top row, green border) and the post-trained model (bottom row, pink border). The baseline produces chemically invalid structures, while post-training yields valid conformers for the same molecular graphs. Atom colors: carbon (gray), nitrogen (blue), oxygen (red), sulfur (yellow), chlorine (green); hydrogens are small gray spheres.}
    \label{fig:invalid_to_valid}
\end{figure}
% \begin{figure}[t]
% \centering
% \includegraphics[width=\textwidth]{pics/geom_conformer_quality_3d_overlay.pdf}
% \caption{Conformer quality comparison on GEOM-Drugs for molecules that remain valid before and after post-training. Each column shows a molecule with $N$ atoms, comparing baseline EDM (top row, green) against Elign (bottom row, pink). Annotations report atom stability (AtomStab), molecule stability (MolStab), MLFF energy ($E$, eV/atom), and RMS force magnitude ($|F|_{\mathrm{rms}}$, eV/\AA). While both methods produce valid molecular graphs, Elign generates conformations with higher stability scores, lower energies, and reduced residual forces, indicating more physically relaxed geometries closer to local energy minima.}
% \label{fig:geom_conformer_quality}
% \end{figure}

\subsection{Property controllability and predictor-based conditional accuracy.}
\label{sec:conditional-gen}
We split the QM9 training set into two disjoint halves with 50k samples each. We train the property prediction network $\omega$ on the first half, and train conditional generative models on the second half.
We sample target property values $\mathbf{c}$ from the held-out QM9 property distribution, generate conditional samples $\z_0\sim \rho_\theta(\cdot\mid \mathbf{c})$, and compute predicted properties $\hat{\mathbf{c}}(\z_0) \coloneqq \omega(\z_0)$.
We then report the MAE to the conditioning target, i.e., $\mathbb{E}\,\|\hat{\mathbf{c}}(\z_0)-\mathbf{c}\|_1$ (not to an external QC oracle).

\emph{Property alignment reward.}
For \emph{conditional generation} with a target property vector $\mathbf{y}$, we add an auxiliary \emph{property alignment reward} at the terminal state:
\begin{equation}
    r^{(P)}_{0}(\z_0;\mathbf{y}) \coloneqq -\,\bigl\|\omega(\z_0) - \mathbf{y}\bigr\|_{1}.
    \label{eq:property_alignment_reward}
\end{equation}
The total terminal reward combines $r^{(P)}_{0}$ with the energy/force rewards used by Elign. The property reward is treated as an additional disentangled channel (analogous to energy and force) and is \emph{group-normalized separately} before being weighted and added to the total advantage.

\emph{Results.}
As shown in Table~\ref{tab:property_mae}, Elign substantially reduces MAE compared to unguided diffusion baselines, narrowing the gap toward the QM9 oracle.
Random samples define an upper bound on MAE, while QM9 ground-truth molecules provide a lower bound.
Lower MAE indicates better conditional controllability/accuracy with respect to the learned predictor $\omega$ (we do not evaluate against an external quantum-chemistry oracle here).
We report results for isotropic polarizability $\alpha$, HOMO--LUMO gap $\Delta \varepsilon$, and LUMO energy $\varepsilon_{\mathrm{LUMO}}$.
Notably, Elign improves all properties simultaneously, indicating that alignment with MLFF-derived energy and force feedback yields samples that are not only structurally stable but also chemically predictive.

\begin{table}[!h]
\centering
\caption{Mean Absolute Error (MAE) for molecular property prediction.
Lower values indicate better controllable generation.
Properties are predicted by a pretrained E(3)-equivariant EGNN regressor $\omega$
on molecular samples generated by each method.
QM9 and Random serve as lower and upper bounds, respectively.}
\label{tab:property_mae}
\resizebox{0.5\columnwidth}{!}{
\begin{tabular}{lccc}
\toprule
Property & $\alpha$ & $\Delta \varepsilon$ & $\varepsilon_{\mathrm{LUMO}}$ \\
Units & Bohr$^{3}$ & meV & meV \\
\midrule
QM9 & 0.10 & 64 & 36 \\
\midrule
Random & 9.01 & 1470 & 1457 \\
$N_{\text{atoms}}$ & 3.86 & 866 & 813 \\
EDM \citep{hoogeboom2022equivariant} & 2.76 & 655 & 584 \\
GeoLDM \citep{xu2023geometric} & 2.37 & 587 & 522 \\
GeoBFN \citep{song2024unified} & 2.34 & \underline{577} & \underline{516} \\
\midrule
\rowcolor{ourmethod} Elign (w/ Additional Property Alignment Reward)  & \textbf{2.32} & \textbf{564} & \textbf{512} \\
\bottomrule
\end{tabular}}
\end{table}

\subsection{More Ablation Studies}
\paragraph{FED-GRPO component ablations.}
Table~\ref{tab:fedgrpo_ablation} ablates key ingredients of FED-GRPO.
\textbf{Disentangled normalization} matters because the force channel can otherwise dominate the combined advantage (due to scale), which biases learning toward mechanically ``quiet'' but less diverse solutions and hurts the validity--uniqueness trade-off (V$\times$U).
\textbf{Shared-prefix grouping} matters because, without conditioning comparisons on a common partially denoised prefix, the group-relative baseline has higher variance and the resulting noisy advantages degrade learning.
Finally, \textbf{dense PBRS shaping} matters because without it the reward becomes effectively sparse along the diffusion trajectory, slowing credit assignment and weakening optimization over long horizons.

\begin{table}[t]
\centering
\caption{FED-GRPO component ablations.}
\label{tab:fedgrpo_ablation}
\resizebox{0.8\columnwidth}{!}{
\begin{tabular}{lcccccc}
\toprule
Setting & Disentangle? & Shared prefix? & PBRS? & A$\uparrow$ & M$\uparrow$ & V$\times$U$\uparrow$ \\
\midrule
Full FED-GRPO (Elign) & \checkmark & \checkmark & \checkmark & 99.33 & 93.70 & 95.31 \\
No disentanglement & $\times$ & \checkmark & \checkmark & 99.12 & 92.60 & 93.80 \\
No shared prefix grouping & \checkmark & $\times$ & \checkmark & 98.85 & 83.40 & 91.20 \\
Naive per-step energy (no PBRS) & \checkmark & \checkmark & $\times$ & 99.20 & 92.50 & 94.60 \\
\bottomrule
\end{tabular}}
\end{table}

% ============================================================
% Compact theorem: PBRS induces an (alchemical) force on reverse drift
% (assumes w_E = 1, so beta_eff = 1/w_KL)
% ============================================================
\section{Supplementary Theory: PBRS as an Alchemical Force}
While potential-based reward shaping (PBRS) is commonly introduced as a credit-assignment heuristic, in our setting it admits a concrete physical interpretation. We show that PBRS induces an exponential tilting of the one-step reverse diffusion kernel under a KL-regularized local control view, and that in the small-step limit this tilt manifests as an additional drift term proportional to the gradient of the shaping potential. When the potential is chosen as the negative MLFF energy evaluated on the predicted clean geometry, this drift corresponds to a force-like correction acting on the reverse dynamics. Crucially, because the diffusion state includes both coordinates and relaxed feature channels, the induced term decomposes into a physical force on atomic positions and an auxiliary “alchemical” force on non-geometric variables. This result formalizes PBRS as a principled approximation to energy-guided diffusion, clarifying how dense shaping rewards compile physical guidance into the learned reverse policy at training time, rather than injecting oracle forces during inference

This interpretation assumes a continuous relaxation (and differentiable $E_\phi$); the RL algorithm itself does not require reward differentiability.
\begin{theorem}[PBRS induces an approximate (alchemical) force in the reverse drift]
\label{thm:pbrs_alchemical_compact}
Fix a reverse step $t$ and denote the shaping potential by
$\Psi_t(\z):=-E_\phi(\hat\z_{0|t}(\z))$.
Let the base one-step reverse kernel be Gaussian,
\[
q_t^{\Delta t}(\z_{t-\Delta t}\mid \z_t)
=\mathcal{N}\!\big(\z_{t-\Delta t};\,\z_t+b_t(\z_t)\Delta t,\;a_t\Delta t\big),
\]
(e.g.\ Euler--Maruyama for a reverse SDE; $a_t$ may include $\mathbf P_{\mathrm{CoM}}$).
Consider the per-step KL-regularized local update with PBRS reward
$r_t^{\mathrm{shape}}=\gamma \Psi_{t-\Delta t}(\z_{t-\Delta t})-\Psi_t(\z_t)$ (Eq.~\eqref{eq:pbrs_energy}):
\[
\pi_t^{\star,\Delta t}(\cdot\mid \z_t)
\in \arg\max_{\pi(\cdot\mid \z_t)}
\Big\{
\E_{\pi}[r_t^{\mathrm{shape}}]
-w_{\mathrm{KL}}\KL(\pi(\cdot\mid\z_t)\Vert q_t^{\Delta t}(\cdot\mid\z_t))
\Big\}.
\]
Assume $\Psi_t$ is $C^1$ in $\z$ with locally Lipschitz $\nabla\Psi_t$.
Then (i) the optimizer is the exponential tilt
\[
\pi_t^{\star,\Delta t}(\z_{t-\Delta t}\mid \z_t)
\propto
q_t^{\Delta t}(\z_{t-\Delta t}\mid \z_t)\,
\exp\!\Big(\tfrac{\gamma}{w_{\mathrm{KL}}}\Psi_{t-\Delta t}(\z_{t-\Delta t})\Big),
\]
and (ii) its conditional mean admits the small-step expansion
\[
\E_{\pi_t^{\star,\Delta t}}[\z_{t-\Delta t}\mid \z_t]
=
\z_t+\Big(b_t(\z_t)+\tfrac{\gamma}{w_{\mathrm{KL}}}\,a_t\nabla\Psi_t(\z_t)\Big)\Delta t
+o(\Delta t).
\]
Equivalently, in the $\Delta t\to0$ limit the induced controlled reverse drift is
\[
\tilde b_t(\z)=b_t(\z)+\tfrac{\gamma}{w_{\mathrm{KL}}}\,a_t\nabla\Psi_t(\z)
=b_t(\z)-\tfrac{\gamma}{w_{\mathrm{KL}}}\,a_t\nabla_\z E_\phi(\hat\z_{0|t}(\z)).
\]
Writing $\z=[\x,\h]$, the $\x$-block recovers a force-like term ($-\nabla_\x E_\phi$) while the $\h$-block yields an
``alchemical'' preference direction ($-\nabla_\h E_\phi$) on relaxed types/features.
\end{theorem}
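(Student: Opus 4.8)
\emph{Proof plan.} Part (i) is the one-step analogue of Theorem~\ref{thm:gibbs_tilt}. Fix $\z_t$ and view $r_t^{\mathrm{shape}}(\z_{t-\Delta t})=\gamma\Psi_{t-\Delta t}(\z_{t-\Delta t})-\Psi_t(\z_t)$ as a function of the optimization variable $\z_{t-\Delta t}$ alone; the term $-\Psi_t(\z_t)$ is a constant, so it does not affect the maximizer. The functional $\pi\mapsto \E_{\pi}[r_t^{\mathrm{shape}}]-w_{\mathrm{KL}}\,\KL(\pi\Vert q_t^{\Delta t})$ is exactly the KL-regularized reward problem, so by the same Gibbs variational argument as in the proof of Theorem~\ref{thm:gibbs_tilt} (Lagrange multiplier for normalization, first-order stationarity in the Radon--Nikodym derivative, strict concavity for uniqueness), with reference measure $q_t^{\Delta t}$ and reward $\gamma\Psi_{t-\Delta t}$, the unique maximizer is the exponential tilt $\pi_t^{\star,\Delta t}(\z_{t-\Delta t}\mid\z_t)\propto q_t^{\Delta t}(\z_{t-\Delta t}\mid\z_t)\exp(\tfrac{\gamma}{w_{\mathrm{KL}}}\Psi_{t-\Delta t}(\z_{t-\Delta t}))$. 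To make this well posed I would add the mild hypothesis that $E_\phi$ (hence $\Psi$) grows at most linearly on the relevant region — or simply work with a truncated $E_\phi$, as the implementation does via \texttt{energy\_transform\_clip} — so that for all sufficiently small $\Delta t$ the Gaussian factor dominates and $\exp(\tfrac{\gamma}{w_{\mathrm{KL}}}\Psi_{t-\Delta t})$ is $q_t^{\Delta t}$-integrable.

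Part (ii) is a Laplace-type expansion of the mean of this tilted Gaussian. I work throughout in the $(3N-3)$-dimensional CoM-free subspace, on which $\Sigma:=a_t\Delta t$ is nondegenerate; note that translation invariance of $E_\phi$ forces $\sum_i\nabla_{\x_i}E_\phi=0$, so $\nabla\Psi$ already lies in this subspace and no additional projection is needed. Write $\mu_0:=\z_t+b_t(\z_t)\Delta t$, $\kappa:=\gamma/w_{\mathrm{KL}}$, and substitute $\z_{t-\Delta t}=\mu_0+\sqrt{\Delta t}\,\bm w$ with $\bm w\sim\mathcal{N}(0,a_t)$ under $q_t^{\Delta t}$. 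Factoring out the constant $\exp(\kappa\Psi_{t-\Delta t}(\mu_0))$ and Taylor-expanding $\Psi_{t-\Delta t}(\mu_0+\sqrt{\Delta t}\,\bm w)=\Psi_{t-\Delta t}(\mu_0)+\sqrt{\Delta t}\,\nabla\Psi_{t-\Delta t}(\mu_0)^{\top}\bm w+R(\sqrt{\Delta t}\,\bm w)$ with $|R(\sqrt{\Delta t}\,\bm w)|\le \tfrac{L}{2}\Delta t\,\|\bm w\|^2$ (local Lipschitzness of $\nabla\Psi$), I expand the exponential and compute the ratio $\E_{\pi^\star}[\z_{t-\Delta t}]-\mu_0 = \E_{\bm w}[\sqrt{\Delta t}\,\bm w\,e^{g}]/\E_{\bm w}[e^{g}]$ where $g=\kappa[\Psi_{t-\Delta t}(\mu_0+\sqrt{\Delta t}\,\bm w)-\Psi_{t-\Delta t}(\mu_0)]$. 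The only $O(\Delta t)$ contribution to the numerator comes from the linear term, giving $\kappa\,\Delta t\,a_t\nabla\Psi_{t-\Delta t}(\mu_0)$ via $\E[\bm w\bm w^{\top}]=a_t$; every contribution involving the quadratic (and higher) part of $\Psi$ pairs $\bm w$ with an even power and hence with an odd-order centered Gaussian moment, so it vanishes at this order, and the Lipschitz remainder contributes only $O(\Delta t^{3/2})$. The denominator is $1+O(\Delta t)$. Hence $\E_{\pi_t^{\star,\Delta t}}[\z_{t-\Delta t}\mid\z_t]=\mu_0+\kappa\,\Delta t\,a_t\nabla\Psi_{t-\Delta t}(\mu_0)+o(\Delta t)$. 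Finally I replace $\nabla\Psi_{t-\Delta t}(\mu_0)$ by $\nabla\Psi_t(\z_t)$ at cost $O(\Delta t)$ (local Lipschitzness in $\z$ plus $\mu_0-\z_t=O(\Delta t)$, and mild $t$-regularity of $\hat\z_{0|t}$ for the $t-\Delta t\to t$ shift), and substitute $\mu_0=\z_t+b_t(\z_t)\Delta t$, obtaining $\z_t+\big(b_t(\z_t)+\kappa\,a_t\nabla\Psi_t(\z_t)\big)\Delta t+o(\Delta t)$. Reading off the $\Delta t$-coefficient gives $\tilde b_t=b_t+\kappa\,a_t\nabla\Psi_t=b_t-\kappa\,a_t\nabla_\z E_\phi(\hat\z_{0|t}(\cdot))$, and splitting $\z=[\x,\h]$, $\nabla_\z=[\nabla_\x,\nabla_\h]$ yields the force-like term on positions (which literally becomes $-\nabla_\x E_\phi$ as $t\to0$, where $\hat\z_{0|t}\to\z$) and the alchemical term $-\nabla_\h E_\phi$ on the relaxed feature channels.

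The step I expect to be the main obstacle is the uniformity of the $o(\Delta t)$ remainder in Part (ii): one must confine the moment estimates to a high-probability ball around $\mu_0$ and bound the complementary Gaussian tail, checking that the Taylor remainder of $\Psi$ and the residual terms of the $\exp$-series are integrable against $q_t^{\Delta t}$ and shrink at the advertised rate. This is exactly where the bare ``locally Lipschitz $\nabla\Psi$'' hypothesis must be supplemented by a growth (or truncation) condition on $E_\phi$, together with the observation that $\Sigma^{-1}=a_t^{-1}/\Delta t$ blows up so the tilt is a vanishingly small perturbation of a sharply concentrated Gaussian. Everything else — the variational characterization in Part (i), the completion-of-the-square heuristic, and the block decomposition — is routine, and the degeneracy introduced by $\mathbf{P}_{\mathrm{CoM}}$ is dealt with once and for all by passing to the CoM-free subspace, consistently with the translation invariance of $E_\phi$.
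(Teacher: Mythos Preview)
Your proof of Part~(i) matches the paper exactly: drop the constant $-\Psi_t(\z_t)$ and invoke the Gibbs variational principle with reference $q_t^{\Delta t}$ and reward $\gamma\Psi_{t-\Delta t}$.

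For Part~(ii), your argument is correct but the paper takes a shorter route. Rather than a Laplace/Taylor expansion of the tilt, the paper applies Gaussian integration by parts (Stein's identity): for $U\sim\mathcal{N}(\mu,\Sigma)$ and smooth $g$, $\E[(U-\mu)g(U)]=\Sigma\,\E[\nabla g(U)]$. With $g(u)=\exp(\eta\Psi_{t-\Delta t}(u))$ and $\eta=\gamma/w_{\mathrm{KL}}$, dividing by $\E_q[g(U)]$ yields the \emph{exact} identity
\[
\E_{\pi^{\star,\Delta t}}[U\mid\z_t]=\mu+\Sigma\,\eta\,\E_{\pi^{\star,\Delta t}}[\nabla\Psi_{t-\Delta t}(U)\mid\z_t],
\]
so the only approximation left is $\E_{\pi^{\star}}[\nabla\Psi_{t-\Delta t}(U)]\to\nabla\Psi_t(\z_t)$, handled in one stroke by $U=\z_t+O_{\mathbb P}(\sqrt{\Delta t})$ and the local Lipschitz assumption. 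This sidesteps the moment bookkeeping you identify as the main obstacle: there is no series expansion of $\exp$, no odd/even Gaussian moment parity argument, and no separate numerator/denominator remainder analysis. Your direct expansion gives the same result and is arguably more self-contained, but the Stein route isolates the single place where the $o(1)$ control is needed and makes the structure $\Sigma=a_t\Delta t$ do all the work. Your remarks about integrability, truncation, and passing to the CoM-free subspace are well taken---the paper glosses over these---and they apply equally to both approaches.
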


\begin{proof}
Fix $\z_t$. The term $-\Psi_t(\z_t)$ is constant in $\z_{t-\Delta t}$, so the objective is
\[
\max_\pi\Big\{\gamma\,\E_\pi[\Psi_{t-\Delta t}(\z_{t-\Delta t})]
-w_{\mathrm{KL}}\KL(\pi\Vert q_t^{\Delta t})\Big\}.
\]
A one-line variational calculation (Lagrange multiplier for $\int \pi=1$) gives the unique maximizer
\[
\pi_t^{\star,\Delta t}(u\mid\z_t)\propto q_t^{\Delta t}(u\mid\z_t)\exp(\eta\Psi_{t-\Delta t}(u)),
\qquad \eta:=\gamma/w_{\mathrm{KL}}.
\]
Now use Gaussian integration by parts (Stein’s identity): if $U\sim \mathcal N(\mu,\Sigma)$ and $g$ is smooth,
$\E[(U-\mu)g(U)]=\Sigma\,\E[\nabla g(U)]$.  Apply this with $U\sim q_t^{\Delta t}(\cdot\mid \z_t)$ and
$g(u)=\exp(\eta\Psi_{t-\Delta t}(u))$ to obtain
\[
\E_{\pi_t^{\star,\Delta t}}[U\mid\z_t]
=\mu+\Sigma\,\eta\,\E_{\pi_t^{\star,\Delta t}}[\nabla\Psi_{t-\Delta t}(U)\mid\z_t],
\]
where $\mu=\z_t+b_t(\z_t)\Delta t$ and $\Sigma=a_t\Delta t$.
Since $U=\z_t+O_{\mathbb P}(\sqrt{\Delta t})$ under the Gaussian kernel and $\nabla\Psi$ is locally Lipschitz,
\[
\E_{\pi_t^{\star,\Delta t}}[\nabla\Psi_{t-\Delta t}(U)\mid\z_t]
=\nabla\Psi_t(\z_t)+o(1).
\]
Substituting and collecting terms yields
\[
\E_{\pi_t^{\star,\Delta t}}[\z_{t-\Delta t}\mid \z_t]
=\z_t+\big(b_t(\z_t)+\eta\,a_t\nabla\Psi_t(\z_t)\big)\Delta t+o(\Delta t),
\]
and replacing $\nabla\Psi_t=-\nabla E_\phi(\hat\z_{0|t})$ gives the stated “force” form.
\end{proof}

\paragraph{From theory to measurement.}
Theorem~\ref{thm:pbrs_alchemical_compact} provides a local-control view in which energy PBRS induces a force-like correction to the reverse drift. We next empirically probe this effect by measuring the score change induced by post-training and comparing its position component to the MLFF force direction.

\subsection{Alchemical force analysis}
To understand how post-training compiles physical preferences into the sampler, we measure an \emph{alchemical force} induced by alignment and compare it to the actual force field. Write $\z=[\mathbf{x},\mathbf{h}]$ for positions and atom-type features.
Let $s^{(\mathbf{x})}_{\theta_{\mathrm{pre}}}(\z_t,t)$ and $s^{(\mathbf{x})}_{\theta}(\z_t,t)$ denote the \emph{position} components of the pretrained and post-trained score networks, respectively.
We define the alchemical force proxy (positions only) as
\begin{equation}
\mathbf{F}_{\mathrm{alc}}(\z_t,t) \coloneqq s^{(\mathbf{x})}_{\theta}(\z_t,t) - s^{(\mathbf{x})}_{\theta_{\mathrm{pre}}}(\z_t,t),
\label{eq:alchemical_force}
\end{equation}
so that $\mathbf{F}_{\mathrm{alc}}\in\mathbb{R}^{N\times 3}$ is comparable to a physical force field.
Given a force oracle (e.g., the MLFF) $\mathbf{F}_{\phi}(\hat\z_{0|t})$, we quantify agreement via cosine similarity
\begin{equation}
\mathrm{cos}\big(\mathbf{F}_{\mathrm{alc}},\mathbf{F}_{\phi}\big)
\;\coloneqq\;
\frac{\langle \mathbf{F}_{\mathrm{alc}}(\z_t,t),\,\mathbf{F}_{\phi}(\hat\z_{0|t})\rangle}
{\|\mathbf{F}_{\mathrm{alc}}(\z_t,t)\|\,\|\mathbf{F}_{\phi}(\hat\z_{0|t})\|}.
\label{eq:cos_sim_alc}
\end{equation}
Here $\hat\z_{0|t}$ is the predicted clean geometry from the diffusion posterior mean.
Empirically, we find that $\mathbf{F}_{\mathrm{alc}}$ is strongly aligned with $\mathbf{F}_{\phi}$.
Figure~\ref{fig:alchemical_force_cos_hist} plots a histogram of $\big|\mathrm{cos}(\mathbf{F}_{\mathrm{alc}},\mathbf{F}_{\phi})\big|$ computed over sampled diffusion states and molecules; higher values indicate that the post-training update (post-trained minus pretrained score, position component) points in a direction similar to the MLFF force.
The distribution concentrates toward large cosine similarity, supporting the interpretation that alignment learns a force-like correction in the reverse dynamics.

\begin{figure}[t]
\centering
\includegraphics[width=0.6\linewidth]{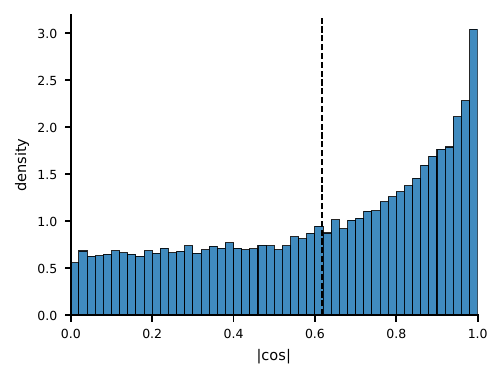}
\caption{Distribution of $\big|\mathrm{cos}(\mathbf{F}_{\mathrm{alc}},\mathbf{F}_{\phi})\big|$ between the position-space alchemical force $\mathbf{F}_{\mathrm{alc}}$ (post-trained minus pretrained score, Eq.~\ref{eq:alchemical_force}) and the MLFF force $\mathbf{F}_{\phi}(\hat\z_{0|t})$ (Eq.~\ref{eq:cos_sim_alc}). Larger values indicate stronger directional agreement.}
\label{fig:alchemical_force_cos_hist}
\end{figure}

\end{document}